\documentclass[dvipsnames]{article} %
\usepackage{colm2024_conference}

\usepackage{booktabs}
\usepackage{enumitem}
\usepackage{wrapfig}


\usepackage{algorithm}
\usepackage{algpseudocode}

\usepackage{graphicx}
\usepackage[misc]{ifsym}
\usepackage{mathtools}   

\usepackage{microtype}
\usepackage{colortbl}
\usepackage[utf8]{inputenc}
\usepackage[T1]{fontenc}
\definecolor{lightgray}{rgb}{0.9,0.9,0.9}
\usepackage{caption}
\usepackage{subcaption}
\usepackage{setspace}
\usepackage{url}
\usepackage{multirow}
\usepackage{tabularx}
\usepackage{blindtext}
\usepackage{pgfplots}
\pgfplotsset{compat=1.18} 
\usepackage{tikz}
\usetikzlibrary{er,positioning,bayesnet}
\usepackage{makecell}
\usepackage{tipa}
\let\tipaThickSpace\;
\let\tipaDoubleBar\|
\let\tipaNegThin\!
\DeclareRobustCommand{\;}{\ifmmode\mskip\thickmuskip\else\tipaThickSpace\fi}
\DeclareRobustCommand{\|}{\ifmmode\Vert\else\tipaDoubleBar\fi}
\DeclareRobustCommand{\!}{\ifmmode\mskip-\thinmuskip\else\tipaNegThin\fi}
\usepackage{siunitx}
\usepackage{nicefrac}
\usepackage{listings}
\usepackage[raster,skins, most]{tcolorbox} %
\usepackage{xltabular}
\usepackage{adjustbox}
\usepackage{xurl}
\usepackage{rotating}
\usepackage[normalem]{ulem}
\usepackage{xspace}

\usepackage{fontawesome}
\usepackage{fancyvrb}
\usepackage{color}
\definecolor{green_first}{RGB}{168, 209, 176}   
\definecolor{green_second}{RGB}{200, 235, 200}  
\definecolor{green_third}{RGB}{235, 255, 235}   
\definecolor{light_green_table}{RGB}{220, 255, 220}  
\definecolor{light_purple_table}{RGB}{235, 225, 255}  
\definecolor{light_green}{rgb}{0.569, 0.800, 0.459}
\definecolor{blue_dist}{rgb}{0.192,0.443,0.651}

\definecolor{purple_dist}{RGB}{125,94,237}

\definecolor{orange_dist}{rgb}{0.812,0.545,0.239}
\definecolor{yellow_dist}{rgb}{0.918,0.804,0.463}
\definecolor{website}{rgb}{0.9333333333333333, 0.10980392156862745, 0.592156862745098} 
\definecolor{pm_rowcolor}{rgb}{0.85, 0.90, 0.84} 
\definecolor{medium_gray}{RGB}{150, 150, 150}  
\definecolor{medium_purple}{RGB}{150, 120, 200}  
\newtcolorbox{promptbox}[2][Prompt]{
    colback=black!5!white,        
    arc=5pt,                      
    boxrule=0.5pt,                
    fonttitle=\bfseries,          
    title=#1,                     
    before upper={\small},        
    fontupper=\fontfamily{ptm}\selectfont, 
    colframe=#2,                  
    left=3pt,                     
    right=3pt,                    
    top=3pt,                      
    bottom=3pt,                   
    boxsep=3pt,                   
    toptitle=1pt,                 
    bottomtitle=1pt,              
    lefttitle=1pt,                
    righttitle=1pt,               
}

\usepackage{xcolor}      
\newcommand\inb[1]{\colorbox{gray!20}{\lstinline|#1|}}

\useunder{\uline}{\ul}{}

\usepackage{amsmath,amsfonts,amssymb,bm}
\usepackage{amsthm}
\newtheorem{theorem}{Theorem}
\newtheorem{assumption}{Assumption}

\theoremstyle{remark}
\newtheorem{remark}{Remark}

\def\eqref#1{equation~\ref{#1}}

\newcommand{\E}{\mathbb{E}}

\def\1{\bm{1}}

\DeclareMathAlphabet{\mathsfit}{\encodingdefault}{\sfdefault}{m}{sl}
\SetMathAlphabet{\mathsfit}{bold}{\encodingdefault}{\sfdefault}{bx}{n}

\usepackage{makecell}
\usetikzlibrary{tikzmark}
\makeatletter
\newcommand*\myfontsize{%
  \@setfontsize\myfontsize{7}{8}%
}
\makeatother

\definecolor{uclablue}{RGB}{159, 195, 224}

\definecolor{uclagold}{RGB}{255, 240, 180}

\definecolor{aliceblue}{RGB}{255, 238, 241}

\definecolor{cadmiumgreen}{rgb}{0.0, 0.42, 0.24}

\definecolor{myred}{rgb}{0.7, 0.3, 0.0}
\definecolor{myblue}{rgb}{0.2, 0.3, 0.6}
\definecolor{babygreen}{rgb}{0.85, 0.97, 0.85}

\definecolor{purple1}{RGB}{126, 107, 196}
\definecolor{purple2}{RGB}{199, 158, 207}
\definecolor{purple3}{RGB}{214, 200, 255}
\definecolor{purple4}{RGB}{254, 240, 255}

\definecolor{deepblue}{RGB}{48, 58, 82}

\newcommand{\symboletongyi}{\raisebox{0pt}{~\includegraphics[scale=0.012]{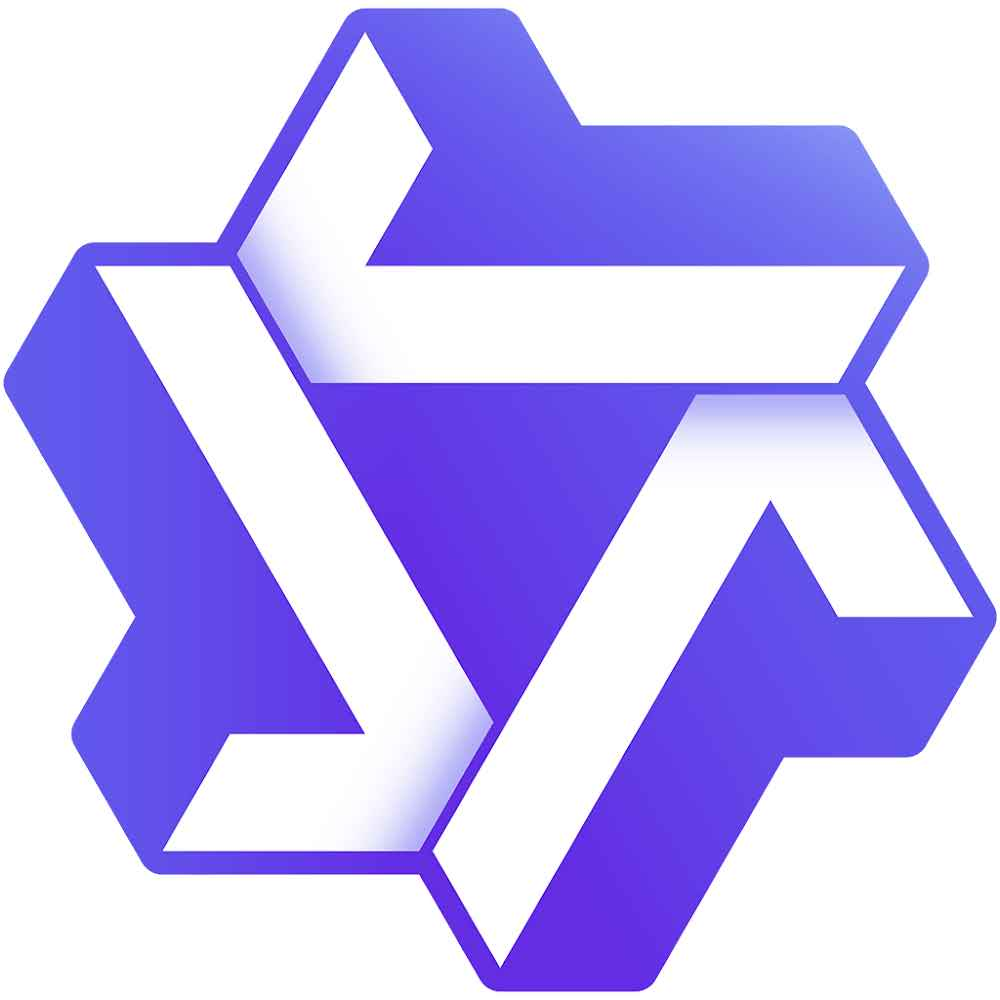}}~}

\definecolor{deepPurple}{HTML}{330066}

\definecolor{uclablue_old}{rgb}{0.15, 0.45, 0.68}
\hypersetup{
    breaklinks,
    citecolor=uclablue_old,
    colorlinks=true,
}

\newtcolorbox{mybox}[2][]
  {colback = black!5!white, colframe = black!75!black, fonttitle = \bfseries,
    colbacktitle = black!100!black, enhanced, before upper={\fontsize{8}{11}\obeyspaces\obeylines\selectfont}, fontupper=\selectfont,
    attach boxed title to top left={yshift=-2.2mm,xshift=4mm},
    title=#2,#1}

\title{%
\begin{tabular}[t]{l} 
  \parbox[t]{0.8\textwidth}{\centering 
    Scaling Self-Evolving Agents via Parametric Memory
  }
\end{tabular}
}

\author{
Tao Ren$^*$,
Weiyao Luo,
Hui Yang,
Rongzhi Zhu,
Xiang Huang,
Yuchuan Wu$^{ (\textrm{\Letter})}$,
Bingxue Chou,
Jieping Ye,
Jiafeng Liang$^{ (\textrm{\Letter})}$,
Yongbin Li$^{ (\textrm{\Letter})}$,
Yijie Peng$^{ (\textrm{\Letter})}$
  \\[1em]               
  {\fontsize{10pt}{11pt}\selectfont          
Qwen-Character Team\symboletongyi, Alibaba Group\\
Peking University}\\
}

\begin{document}

\maketitle

\begingroup
  \renewcommand\thefootnote{\Letter\space}  
  \footnotetext{Corresponding authors: jfliang@ir.hit.edu.cn, pengyijie@pku.edu.cn; * Email to: rtkenny@stu.pku.edu.cn} 
\endgroup

\vspace{-1em}

\begin{abstract}
Existing memory-augmented LLM agents store past experience exclusively in prompt space, as textual summaries or retrieved passages, while keeping model parameters frozen throughout a rollout. Such agents can \emph{look up} what they have seen but cannot \emph{learn from} it: their policy is unchanged by experience, and any information dropped from the context is permanently lost. We introduce \texttt{TMEM}, a self-evolving parametric memory framework in which the agent not only compresses history into explicit memory but also absorbs distilled supervision into fast LoRA weights $\Delta_t$ via lightweight online updates, genuinely altering its future behavior within a single episode. We formalize this as an agentic decision process with fast-weight rollout dynamics: actions are sampled from $\pi_{\theta_0+\Delta_t}$, while extraction actions produce supervision that updates $\Delta_t$ for subsequent decisions. This view makes the extraction policy directly optimizable by RL: training $\theta_0$ improves not only task actions but also the quality of the data used for online LoRA adaptation. We further propose SVD-based initialization of the LoRA subspace to accelerate online convergence. Experiments on LoCoMo, LongMemEval-S, multi-objective search, and CL-Bench show that \texttt{TMEM} consistently outperforms summary-based and retrieval-based baselines across different model scales.
\end{abstract}

\section{Introduction}

\vspace{0.3em}
\noindent\emph{``We are who we are because of what we learn and what we remember.''} \hfill\mbox{--- Eric Kandel}
\vspace{0.3em}

\noindent Long-horizon LLM agents are expected to operate over experiences that far exceed a single context window: multi-session conversations, extended web-search trajectories, iterative tool use, and evolving user preferences\citep{li2026scout,li2026optimallowrank,ren2025riskpo,yang2026omnimaskedgd}. In these settings, memory should do more than preserve a compressed record of past tokens; it should change how future actions are produced. A useful analogy comes from biological memory: learning and memory are coupled because experience is ultimately reflected in the same substrate that performs computation \citep{liang2025aimeetsbrain, liang2026perceptionwithoutengagement}. Current LLM agents largely break this coupling. They either keep more history in the prompt or store past interactions in external memories such as summaries and retrieval indices, while the underlying model parameters remain frozen throughout the rollout. As a result, experience influences future behavior only when it is explicitly brought back as input tokens.

The simplest way to preserve experience is to keep the full interaction history in context, but this strategy fails along two axes. Computationally, attention cost grows quickly with sequence length, making long episodes expensive\citep{yu2025memagent,ren2025flops,ren2025halforder,li2026adaptiverobustestimator}. Statistically, raw histories are dominated by redundant tool outputs, repeated dialogue, and task-irrelevant details, so the relevant evidence becomes sparse inside a noisy prompt. Even when the necessary information is technically present, a single forward pass must both locate it and use it correctly. Once the context is truncated or compressed, any omitted evidence no longer has a path to affect the policy.

Existing memory agents address this bottleneck mainly through prompt-space memory. Summary-based methods periodically compress the history into textual states, reducing context length but introducing a lossy bottleneck that can discard fine-grained facts. Retrieval-based methods preserve more verbatim evidence in an external index, but their effectiveness depends on embedding quality, query formulation, and index maintenance. These approaches are useful, yet they share a structural limitation: the learned policy itself does not change during the episode. The agent can consult stored experience, but it cannot internalize that experience into the computation used for later decisions. This raises a central question: can an LLM agent write useful experience into its own parameters at test time, so that memory shapes the policy rather than merely occupying the prompt?

We introduce \texttt{TMEM}, a self-evolving parametric memory framework for answering this question. During a rollout, the agent maintains a working context $h_t$, optional explicit memory $m_t$, and fast parametric memory $\Delta_t$ represented by LoRA weights. When the context budget is reached, the agent enters a memory-writing mode: it distills the current session into grounded QA-style supervision and applies a lightweight online SFT update to $\Delta_t$. Subsequent actions are then sampled from the adapted policy $\pi_{\theta_0+\Delta_t}$, where the base parameters $\theta_0$ remain fixed within the rollout. In this way, distilled experience can influence future reasoning through fast weights, without requiring the same evidence to be repeatedly reinserted into the prompt.

This formulation also changes what should be optimized. Memory extraction is not an auxiliary preprocessing step; it is an action whose quality determines the data used for later online adaptation. We therefore formalize agent execution as a fast-weight rollout in which ordinary task actions, memory-writing actions, explicit-memory updates, and LoRA updates are part of one decision process. During RL training, $\theta_0$ is optimized across rollouts while gradients are stopped through the online update operator. The resulting objective trains the base model not only to solve tasks, but also to produce supervision that makes its own future fast-weight updates useful.

Our contributions are as follows:
\begin{enumerate}
    
  \item We formalize \texttt{TMEM} as a fast-weight rollout process whose policy depends on working context, explicit memory, and online LoRA weights. Setting $\Delta_t \equiv 0$ recovers purely explicit-memory agents as special cases, unifying existing context-management designs under a single framework.

  \item We propose SVD-based initialization for the LoRA projection matrix, anchoring online updates to high-energy directions of the pretrained weights. This accelerates few-step adaptation and reduces per-trigger compute by fixing the projection matrix and updating only the coefficient matrix.
    
  \item We derive a stop-gradient policy optimization objective over fast-weight rollouts, allowing outcome rewards to improve both task behavior and the extraction of supervision used by online LoRA updates.
    
  \item We validate \texttt{TMEM} on LoCoMo, LongMemEval-S, multi-objective search, and CL-Bench, showing consistent gains over summary-based and retrieval-based memory baselines across Qwen3-4B and Qwen3-8B backbones.
\end{enumerate}

\section{Related Works}

\paragraph{Summarization-based context management.}
Summarization-based methods compress long interaction histories into concise textual memories that fit within a fixed context budget. Early work explores heuristic or recurrent-style compression, such as \texttt{RecurrentGPT}'s language-state simulation and \texttt{MemoryBank}'s forgetting-inspired memory update \citep{zhou2023recurrentgpt, zhong2024memorybank}. Later methods learn when and how to compress context: \texttt{MemAgent} performs chunk-wise memory overwriting for long-context conversational tasks without tool calls \citep{yu2025memagent}, while \texttt{MEM1} applies the same idea to search-agent tasks that invoke an external search tool \citep{zhou2025mem1}; \texttt{SUPO} and \texttt{Context-Folding} further optimize summary use for long-horizon agents \citep{lu2025supo, sun2025scaling}. Recent systems further improve extraction quality or scalability through proactive correction and adaptive stopping \citep{yang2026promem, wang2026infmem}. However, their memory still lives entirely in context space, whose compression capacity is limited: once fine-grained evidence is omitted from a summary, later reasoning cannot recover it.

\paragraph{Retrieval-augmented context management.}
Retrieval-based methods preserve history in external stores and retrieve relevant snippets when needed. Standard RAG first augments generation with dense retrieval over documents \citep{lewis2020rag}; agent systems then extend this idea to interactive memory, including \texttt{Generative Agents}' reflection streams and \texttt{MemGPT}'s managed context cache \citep{park2023generative, packer2023memgpt}. More structured memory systems organize stored experience with entity graphs, scalable long-term stores, RL-trained memory management, or agentic linking, such as \texttt{GraphRAG}, \texttt{Mem0}, \texttt{Memory-R1}, and \texttt{A-MEM} \citep{graphrag2024, chhikara2025mem0, yan2025memoryR1, xu2026mem}. Retrieval avoids aggressive summarization, but it depends heavily on retriever quality: irrelevant or missed passages directly affect the policy, and constructing a high-quality memory RAG store with embeddings, metadata, and update rules is often time-consuming.

\paragraph{Test-time training.}
Test-time training (TTT) adapts model parameters at inference time to handle distribution shifts or instance-specific information \citep{sun2020test}. Recent LLM-oriented work scales this idea to long contexts or language tasks: end-to-end TTT adapts to long-context inputs, \texttt{LaCT} improves chunk-level update efficiency, and methods such as \texttt{TLM} and \texttt{TT-SI} use unlabeled test data or self-generated data for on-the-fly improvement \citep{tandon2025endtoend, lact2025, hu2025tlm, acikgoz2025ttsi}. \texttt{Doc-to-LoRA} further internalizes documents into LoRA weights with a hypernetwork \citep{charakorn2026doctolora}. However, existing TTT methods are not optimized for agentic memory: they typically train on the whole context or document rather than selectively learning from agent-relevant experience, making updates heavy and outside the agent's decision process. We instead treat fast weights $\Delta_t$ as part of the rollout dynamics and update them only from distilled memory.

\paragraph{Self-evolving agents.}
Self-evolving agents progressively improve their behavior by reusing feedback, failures, or self-generated experience. Early systems mainly evolve through non-parametric experience accumulation: \texttt{Reflexion} stores verbal self-critiques, \texttt{SPRING} converts papers into game-playing strategies, and \texttt{Voyager} grows a code-based skill library across Minecraft episodes \citep{shinn2023reflexion, wu2023spring, wang2024voyager}. Later methods strengthen this loop with distilled experience or self-generated training data, such as \texttt{ExpeL}'s textual lessons and \texttt{SPIN}'s self-play fine-tuning \citep{zhao2024expel, chen2024spin}. Recent work further targets autonomous agent improvement: \texttt{UI-Voyager} learns GUI control from failed trajectories, while \texttt{LSE} trains models to refine their own test-time contexts with improvement-based rewards \citep{lin2026uivoyager, chen2026lse}. Unlike these methods, which mostly evolve across episodes, tasks, or outer-loop updates, \texttt{TMEM} performs \emph{intra-episode parametric self-evolution}: distilled experience is written into fast LoRA weights $\Delta_t$ during a single rollout, so the policy itself changes in real time rather than only consulting external memories or revised prompts.

\section{Agentic Decision Process with Parametric Memory}

We formalize \texttt{TMEM} as an agentic decision process in which the policy can change during a rollout through fast LoRA updates. Each episode starts from a task prompt $q \sim D$ and contains at most $T$ model-generation events. Let $\mathcal{V}$ denote the set of finite token vocabulary and $\mathcal{V}^{*}$ the set of finite token sequences. At generation event $t\in\{1,\ldots,T\}$, the agent maintains a working context $h_t \in \mathcal{V}^{*}$, an explicit textual memory $m_t \in \mathcal{V}^{*}$, and fast parametric memory $\Delta_t$ represented by LoRA weights. The generated output is sampled from the adaptive policy
\begin{equation}
a_t \sim \pi_{\theta_0 + \Delta_t}(\cdot \mid c_t),
\qquad c_t \in \{(q, h_t, m_t), (q, h_t, m_t, d)\},
\end{equation}
where $a_t$ denotes the model-generated action. Its semantics are determined by the conditioning context: under the ordinary context $(q,h_t,m_t)$ it is a task/tool action or final response; while under the extraction context $(q,h_t,m_t,d)$, in which $d$ is a memory-writing prompt, it is a memory-writing action such as a summary, QA pairs, distilled facts, or instruction--response examples. The base parameters $\theta_0$ are fixed within a rollout but optimized across RL training; $\Delta_t$ changes within the rollout and acts as fast-weight memory. Purely explicit-memory agents are recovered by setting $\Delta_t \equiv 0$.

\begin{figure}[h]
\centering
\begin{promptbox}[Memory-writing prompt $d$ for QA-pair extraction]{purple_dist}

\textbf{Task: Generate grounded SFT QA pairs from the current session.}
\\
\small
Given the problem to solve, previous conversation history. Now you should create high-quality supervised fine-tuning (SFT) QA pairs grounded on the history.

Requirements:
\begin{enumerate}[leftmargin=*,nosep]
    \item Generate QA pairs adaptively based on how much useful information is present in the session.
    \begin{itemize}[leftmargin=*,nosep]
        \item If the session contains rich, concrete facts, generate more QA pairs.
        \item If the session has limited useful evidence, generate fewer QA pairs.
        \item If there is no usable evidence, return an empty JSON array.
    \end{itemize}
    \item You can generate QA pairs that capture the lessons learned from the session to help improve future interactions, such as preferences, plans, events, and temporal details, rather than just factual questions.
    \item Each question must be answerable using explicit information from the session.
    \item Each answer must be concise, factual, and directly supported by the session.
    \item Cover diverse types when possible: who/what/when/where, preferences, plans, events, and temporal details.
    \item Avoid duplicate or near-duplicate QA pairs, and keep wording natural and clear.
\end{enumerate}

Return ONLY a JSON array. Each item must be:
\begin{Verbatim}[fontsize=\small]
{
  "instruction": "<question>",
  "output": "<answer>"
}
\end{Verbatim}

Output the generated SFT QA pairs in the specified JSON format. Do not include any explanations or additional text.
\end{promptbox}
\caption{Memory-writing prompt $d$ used when the working context exceeds the preset length. The prompt instructs the agent to extract grounded SFT QA pairs from the current session for online LoRA updates.}
\label{fig:memory-writing-prompt}
\end{figure}

\subsection{Fast-Weight Rollout Dynamics}

Let $L_{\max}$ be the context budget and let $\ell(\cdot)$ denote token length. For compactness, write $s_t=(q,h_t,m_t,\Delta_t)$ and let $\{(s_t,a_t)\}_{t=1}^{T}$ be the complete sequence of model-generation events in a rollout. Memory operations are triggered at indices $\{t_i\}_{i=1}^{N}$ where $\ell(h_{t_i})+\ell(m_{t_i})>L_{\max}$ before the final response is emitted. We set $t_0=0$ and $t_{N+1}=T$, with empty ranges ignored. Between two consecutive boundaries, for $i=1,\ldots,N+1$ and $t=t_{i-1}+1,\ldots,t_i-1$, the agent performs normal interaction:
\begin{align}
a_t &\sim \pi_{\theta_0 + \Delta_t}(\cdot \mid q, h_t, m_t), \\
o_t &\sim \mathbb{P}_{env}(\cdot \mid q, h_t, m_t, a_t), \\
h_{t+1} &= (h_t, a_t, o_t), \qquad m_{t+1}=m_t, \qquad \Delta_{t+1}=\Delta_t,
\end{align}
where $o_t$ denotes the environment feedback or observation returned after executing $a_t$, such as a tool result, user response, or task-state update, and $h_{t+1}=(h_t, a_t, o_t)$ denotes concatenation. At each trigger index $t_i$ for $1\leq i\leq N$, the agent enters an extraction mode by appending the memory-writing prompt $d$ in Figure~\ref{fig:memory-writing-prompt}, which asks it to distill the accumulated context into QA-pair supervision:
\begin{equation}
a_{t_i} \sim \pi_{\theta_0 + \Delta_{t_i}}(\cdot \mid q, h_{t_i}, m_{t_i}, d),
\end{equation}
where the appended prompt $d$ makes $a_{t_i}$ a memory-writing action rather than a normal environment action. 
The explicit memory and fast weights are then updated according to the memory strategy:
\begin{equation}
(m_{t_i+1}, \Delta_{t_i+1}, h_{t_i+1}) =
\begin{cases}
(\varnothing, \mathcal{T}(\Delta_{t_i}, a_{t_i}), \varnothing),
& \text{\texttt{TMEM}},\\
(a_{t_i}, \Delta_{t_i}, \varnothing),
& \text{summary-only baseline}.
\end{cases}
\end{equation}
Here $\mathcal{T}$ is a lightweight online SFT/LoRA update that writes the extracted supervision into fast weights. In \texttt{TMEM}, the extraction action $a_{t_i}$ is structured as QA-style supervision: it is absorbed into $\Delta_{t_i+1}$ through $\mathcal{T}$, while no textual memory is carried forward. In the summary-only baseline, the fast weights remain unchanged and the extracted summary $a_{t_i}$ is carried forward as explicit prompt-space memory. At the final boundary $t_{N+1}=T$, the action $a_T$ is the terminal task response sampled from the ordinary context $(q,h_T,m_T)$; it receives the episode reward and produces no subsequent environment transition.

\begin{figure}[t]
\centering
\includegraphics[width=0.95\linewidth]{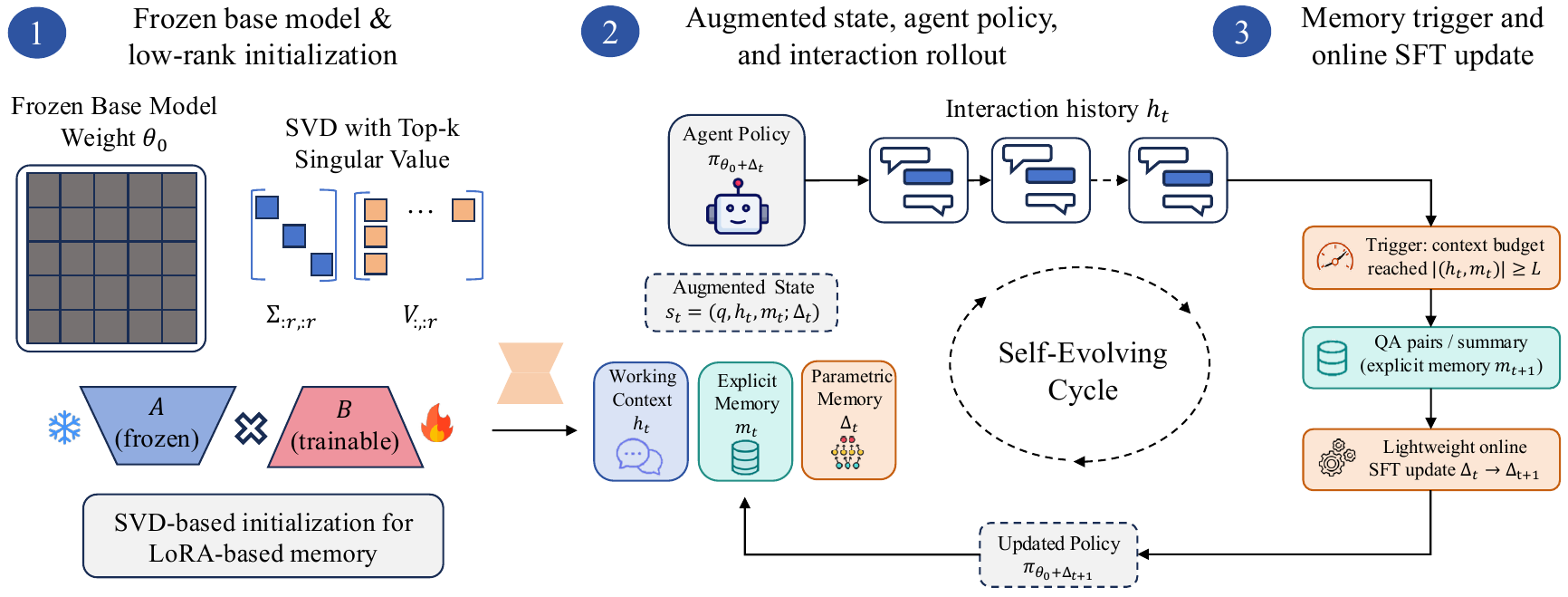}
\caption{Architecture of \texttt{TMEM}.}
\label{fig:ttt-memory-framework}
\end{figure}

\subsection{Parametric Memory Initialization via SVD}
Standard LoRA initializes $\Delta_0 = BA$ with $B = 0$ and $A$ drawn randomly, so that the adapted model starts identical to the base model $\theta_0$ while the low-rank subspace explored during training is arbitrary. In our setting, each TTT trigger has only a small number of gradient steps, so random subspaces waste early updates discovering where useful adaptation should occur. We instead initialize the projection matrix $A$ from the pretrained weight spectrum and train only the coefficient matrix $B$ online.

In \texttt{TMEM}, the LoRA adapters are applied only to the feed-forward network (FFN) projections in the Transformer backbone. For each selected target weight $W \in \mathbb{R}^{d_{\text{out}} \times d_{\text{in}}}$, let $W = U \Sigma V^{\top}$ and denote the top-$r$ right singular vectors and singular values by $V_r \in \mathbb{R}^{d_{\text{in}} \times r}$ and $\Sigma_r \in \mathbb{R}^{r \times r}$. We set
\begin{equation}
B_0 = 0, \qquad A_0 = \Sigma_r V_r^{\top};
\end{equation}
Because $B_0 = 0$, the initialization $\Delta_0 = B_0 A_0 = 0$ and the rollout begins from the original base policy. The difference from standard LoRA is that $A_0$ already spans the rank-$r$ subspace associated with the largest singular values of $W$. During online TTT, we keep $A=A_0$ fixed and update only $B$, so the few available gradient steps learn coefficients within a pretrained, high-energy subspace rather than jointly searching for both the subspace and the coefficients.


\begin{theorem}[Approximation advantage of SVD initialization]\label{thm:svd}
Under the setup and assumptions in Appendix~\ref{app:svd}, SVD initialization achieves no larger approximation error than random Gaussian initialization:
\[
\mathcal{E}(A_{\textup{SVD}}) \;\leq\; \E\bigl[\mathcal{E}(A_{\textup{rand}})\bigr],
\]
and the inequality is strict whenever $\rho_r(\Delta^{*}, W) > r / d_{\textup{in}}$.
\end{theorem}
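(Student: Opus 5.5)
We have to guess the setup from Appendix~\ref{app:svd}. The natural reading is as follows. With $A$ fixed and only $B$ trained, the best reachable update in the span of $A$ is $B^\star A$. The approximation error is
\[
\mathcal{E}(A) = \min_B \|\Delta^* - BA\|_F^2 = \|\Delta^*\|_F^2 - \|\Delta^* P_A\|_F^2,
\]
where $P_A$ is the orthogonal projector onto the row space of $A$. For $A_{\textup{SVD}}=\Sigma_r V_r^\top$ (with $\Sigma_r$ invertible), $P_A = V_rV_r^\top$. I would define the alignment quantity as
\[
\rho_r(\Delta^*,W) = \frac{\|\Delta^* V_r V_r^\top\|_F^2}{\|\Delta^*\|_F^2},
\]
the fraction of the target update's energy lying in the top-$r$ right singular subspace of $W$. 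Presumably this is the quantity the appendix assumes. We then get $\mathcal{E}(A_{\textup{SVD}}) = \|\Delta^*\|_F^2 (1-\rho_r)$.

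\textbf{Step 1: the SVD error.} Derive the expression above via the least-squares normal equations: $B^\star = \Delta^* A^\top (AA^\top)^{-1}$ gives $B^\star A = \Delta^* P_A$, and then apply the Pythagorean identity.

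\textbf{Step 2: the random error.} Let $A_{\textup{rand}}$ have i.i.d.\ Gaussian entries. Almost surely it has rank $r$, and its row space is Haar-distributed on the Grassmannian of $r$-planes in $\mathbb{R}^{d_{\textup{in}}}$ by rotation invariance. Hence $\E[P_A]$ commutes with every orthogonal matrix, so it is a multiple of the identity. Taking traces gives $\E[P_A] = (r/d_{\textup{in}}) I$. By linearity,
\[
\E\|\Delta^* P_A\|_F^2 = \operatorname{tr}\bigl(\Delta^* \E[P_A] \Delta^{*\top}\bigr) = \tfrac{r}{d_{\textup{in}}}\|\Delta^*\|_F^2,
\]
using $P_A^2=P_A$. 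Therefore $\E[\mathcal{E}(A_{\textup{rand}})] = \|\Delta^*\|_F^2\,(1 - r/d_{\textup{in}})$.

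\textbf{Step 3: comparison.} Subtracting the two errors gives
\[
\E[\mathcal{E}(A_{\textup{rand}})] - \mathcal{E}(A_{\textup{SVD}}) = \|\Delta^*\|_F^2\,\bigl(\rho_r - r/d_{\textup{in}}\bigr).
\]
Strictness follows immediately when $\rho_r > r/d_{\textup{in}}$, provided $\Delta^*\neq 0$; that condition is implied, since $\rho_r$ is only defined or positive when $\Delta^*\neq 0$. The non-strict inequality requires $\rho_r \ge r/d_{\textup{in}}$. This must be exactly the appendix's assumption: $\Delta^*$ is spectrally aligned with $W$, in the sense that its energy in the top-$r$ subspace is at least the isotropic baseline. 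An example would be a statistical model in which $\Delta^*$ has row covariance proportional to $W^\top W$, so that the expected $\rho_r$ is $\sum_{j\le r}\sigma_j^2/\sum_j\sigma_j^2 \ge r/d_{\textup{in}}$ by ordering of singular values. If the assumption is instead stated distributionally, I would take expectations over $\Delta^*$ as well and use that ordering argument.

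\textbf{Main obstacle.} The main obstacle is matching the exact assumption and definition of $\rho_r$ in the appendix, especially the step that guarantees $\rho_r\ge r/d_{\textup{in}}$ for the non-strict part. The Haar-invariance computation of $\E[P_A]$ is the only genuinely technical step, and it is standard.
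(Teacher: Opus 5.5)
Your proposal is correct and follows the paper's proof essentially step for step. The shared steps are the least-squares reduction $\mathcal{E}(A)=\|\Delta^{*}\|_F^2-\|\Delta^{*}P_{\operatorname{row}(A)}\|_F^2$, the identity $P=V_rV_r^{\top}$ for $A_{\textup{SVD}}$ (your $\rho_r$ equals the paper's $\|\Delta^{*}V_r\|_F^2/\|\Delta^{*}\|_F^2$), Haar invariance giving $\E[P]=(r/d_{\textup{in}})I$, and the final comparison under exactly the assumption you guessed, $\rho_r\ge r/d_{\textup{in}}$ (Assumption~\ref{asm:align}), with $\Delta^{*}\neq 0$ assumed in the setup; the only cosmetic difference is that you get $\E[P]\propto I$ from commutation with all orthogonal matrices plus a trace count, whereas the paper writes $P=QQ^{\top}$ via a thin QR factorization and cites the symmetry of the Haar measure.
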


This theorem justifies using SVD to initialize the LoRA subspace: before any online update, it already provides a better (or equal) low-rank approximation target than a random subspace in expectation.

\section{Policy Optimization with Fast-Weight Rollouts}
The fast-weight rollout couples two kinds of actions sampled from the same policy: ordinary actions that interact with the environment and extraction actions, triggered by the memory prompt $d$, that become training data for later LoRA updates. We optimize the base model parameters $\theta_0$ so that both kinds of actions improve the final task reward. During each sampled rollout, $\theta_0$ is held fixed while $\Delta_t$ evolves through $\mathcal{T}$; across RL updates, $\theta_0$ is trained to become a better initialization for this self-evolving process.

Let $\tau$ denote the full rollout, including normal actions, observations, extraction actions, explicit-memory updates, and fast-weight updates. We use an outcome reward $R(\tau)$ and optimize
\begin{equation}
J(\theta_0)=\mathbb{E}_{\tau\sim p_{\theta_0,\mathcal{T},\mathbb{P}_{env}}}\left[R(\tau)\right].
\end{equation}
In our formulation, the LoRA fast weights $\Delta_t$ are test-time memory states produced by the online operator $\mathcal{T}$, rather than RL parameters to optimize directly. During RL, we therefore treat $\mathcal{T}$ as part of the rollout transition and apply stop-gradient through the update path that produces $\Delta_t$. This design makes the optimization target explicit: train the base parameters $\theta_0$ to produce better ordinary actions and better QA-style extraction actions, so that the induced test-time updates are more useful for later decisions. Using a segment-wise likelihood-ratio decomposition over the fast-weight rollout, define the boundary context
\[
\bar c_i =
\begin{cases}
(q,h_{t_i},m_{t_i},d), & 1\leq i\leq N,\\
(q,h_T,m_T), & i=N+1.
\end{cases}
\]
The corresponding policy gradient estimation is
\begin{equation}
\nabla_{\theta_0}J(\theta_0)
\approx
\mathbb{E}_{\tau}\left[
R(\tau)
\left(
\sum_{i=1}^{N+1}
\left[
\sum_{t=t_{i-1}+1}^{t_i-1}
\nabla_{\theta_0}\log\pi_{\theta_0+\operatorname{sg}(\Delta_t)}(a_t\mid q,h_t,m_t)
+
\nabla_{\theta_0}\log\pi_{\theta_0+\operatorname{sg}(\Delta_{t_i})}(a_{t_i}\mid \bar c_i)
\right]
\right)
\right],
\end{equation}
where $\operatorname{sg}(\cdot)$ denotes stop-gradient and $d$ is the extraction prompt used to elicit QA-pair supervision. For each segment, the inner sum covers ordinary actions before the next boundary, and the boundary log-probability trains either the extraction action at $t_i$ when $i\leq N$ or the final response when $i=N+1$. A short derivation is provided in Appendix~\ref{app:pg}.

Gradients from normal actions improve reasoning and tool-use behavior, while gradients from extraction actions improve the model's ability to produce supervision that can be absorbed by $\mathcal{T}$. Since later rewards are generated under adapted policies $\pi_{\theta_0+\Delta_t}$, the RL signal favors base models that are not only capable at inference time but also easy to specialize through a small number of online LoRA updates.

\section{Experiments}

\subsection{Experimental Setup}

We evaluate four memory strategies---no memory, summary-based memory, retrieval-based memory, and \texttt{TMEM}---with Qwen3-4B and Qwen3-8B backbones across four task families: LoCoMo, LongMemEval-S, multi-objective search, and CL-Bench. Within each benchmark, all memory strategies use the same backbone model, while the remaining task-specific settings follow the original benchmark protocols. We report three-run evaluations under fixed settings and use the run average as the main score.

\paragraph{Baselines.}
The no-memory baseline keeps only the current working context~\citep{yao2022react}. The summary-based baseline uses prompt-space memory rewriting and is instantiated differently by task family: for conversational and context-learning tasks (LoCoMo, LongMemEval-S, and CL-Bench), which require no external tool calls, we follow \texttt{MemAgent}~\citep{yu2025memagent}, which performs chunk-wise memory overwriting over long dialogue and document contexts; for search-agent tasks (multi-objective search), which require an active search tool, we follow \texttt{MEM1}~\citep{zhou2025mem1}, which maintains a running compressed memory across search steps. The retrieval-based baseline uses \texttt{A-MEM}-style external memory retrieval~\citep{xu2026mem}. These baselines do not update model parameters during an evaluation episode. For all methods, the input examples, answer templates, and scoring rules are shared. When reporting efficiency, we include the cost of memory-specific operations, including retrieval/index access for retrieval memory, summary rewriting for summary memory, and online LoRA updates for \texttt{TMEM}.

\paragraph{Metrics.}
For QA-style benchmarks(LoCoMo and LongMemEval), we report token-level F1 and Exact Match (EM) after lowercasing, stripping punctuation, and normalizing whitespace. Scores are computed per instance and then averaged over the evaluation set. For multi-objective search, the final answer is parsed as an ordered answer list; each position is matched to the corresponding objective, and episode-level F1/EM is averaged over positions before aggregating across episodes. For CL-Bench, we use Qwen3-Max as an LLM-as-judge to score each response against the provided rubric, and we report rubric-based accuracy on the filtered evaluation set; category columns are subset accuracies, while the total column is instance-level accuracy over all 289 retained examples. Unless otherwise noted, each reported table number is the mean over 3 independent runs, and the $\pm$ term denotes the standard deviation across runs.

\paragraph{Conversational memory.}
We use LoCoMo~\citep{locomo2024} and LongMemEval-S~\citep{longmemeval2024} to evaluate long-term conversational memory. For LoCoMo, we use the official LoCoMo-10 QA setting, where each example provides a very long multi-session dialogue history and a memory question. For LongMemEval-S, we use the released 500-instance evaluation set under the full-haystack protocol: the model receives the full timestamped chat history and the memory question, not oracle evidence sessions.

\paragraph{Search-agent memory.}
We evaluate search-oriented memory with the multi-objective search protocol used by MEM1~\citep{zhou2025mem1}. Here an objective is an independent QA subgoal rather than an optimization objective: each episode combines 4 or 8 QA questions into one long-horizon search task. The agent searches for unresolved subquestions over multiple steps under the source task protocol and must finally output all answers in the original order using a fixed answer-list format. We compute F1 and EM on the ordered answer list.

\paragraph{Context learning.}
We evaluate context learning on CL-Bench~\citep{clbench2025} after filtering the released benchmark. The motivation for filtering is discriminability: when we run the full benchmark on 4B and 8B models, near-floor accuracy makes it difficult to distinguish memory strategies from each other or from the no-memory baseline. To construct a subset with sufficient signal, we first ran Qwen3-30A3B on every instance and scored each response against the provided rubric conditions; we then retained the instances that Qwen3-30A3B answered correctly at least partially, yielding a set where task difficulty is matched to the capability range of the target models rather than too hard for any of them. This filtering is motivated purely by discriminability: we do not select instances on which any 4B or 8B method performs well. The resulting set contains 289 instances across Domain Knowledge Reasoning, Procedural Task Execution, Rule System Application, and Empirical Discovery \& Simulation. Each instance provides a system prompt, task context, user task, and rubric; we then use Qwen3-Max as an LLM-as-judge to evaluate whether the response satisfies the rubric conditions, and an answer is counted as correct only if it passes the rubric judgment. We use this filtered CL-Bench set as a discriminative context-learning testbed.

\paragraph{TMEM implementation.}
Across all tasks, the method uses the same online LoRA module construction and TTT training hyperparameters. We set the LoRA rank to $r=6$ and attach LoRA adapters only to the feed-forward network (FFN) projection matrices (\texttt{gate\_proj}, \texttt{up\_proj}, and \texttt{down\_proj}) in the last 4 Transformer layers; attention projections are left unchanged. The SVD-initialized projection matrix $A=A_0$ is frozen, and only the coefficient matrix $B$ is updated online. Fast-weight updates are cumulative across triggers within an episode: each trigger starts from the current $B$ rather than resetting to zero. At each memory trigger, the extracted JSON QA supervision is parsed into instruction--answer pairs and used for online SFT with SGD, learning rate $5\times10^{-4}$, 5 epochs, and batch size 16. The trigger counter uses the token length of the current working context plus explicit memory; after a trigger, the working context is cleared, and in \texttt{TMEM} the extracted content is retained through $\Delta_t$ rather than as prompt text. We set the trigger budget to $L_{\max}=4096$ for LoCoMo, $L_{\max}=12288$ for LongMemEval-S, $L_{\max}=8192$ for the search-agent tasks, and $L_{\max}=4096$ for CL-Bench.

\paragraph{RL data construction and training configuration.}
For the RL phase, we directly use the existing RL data and task protocols from prior memory-agent work. Conversational RL follows the long-context QA data used by MemAgent~\citep{yu2025memagent}, while search-agent RL follows the multi-objective task data used by MEM1~\citep{zhou2025mem1}. Training uses the stop-gradient outcome-reward objective described in Section~4: final task reward trains both ordinary response tokens and memory-writing tokens, while gradients are not back-propagated through the online LoRA optimization. We keep the same RL data sources, task formats, train/evaluation separation, rollout budget, and update schedule across memory strategies so that post-RL comparisons differ only in the memory mechanism being optimized. We train with GRPO for 200 update steps using batch size 64, mini-batch size 16, learning rate $10^{-6}$, maximum response length 1024 tokens, and $n=8$ rollouts per prompt. For search-agent tasks, each episode allows up to 10 search turns.

\subsection{Evaluation before the RL}

We first evaluate each memory strategy before applying the RL phase. This isolates the effect of the memory mechanism itself: no memory, summary-based memory, retrieval-based memory, and \texttt{TMEM}.

\begin{table}[h]
\centering
\caption{F1 and Exact Match (EM) on conversational memory benchmarks (LoCoMo and LongMemEval-S). \textbf{Bold} indicates the best result per column.}
\label{tab:conv-results}
\small
\begin{tabular}{llcccc}
\toprule
 & & \multicolumn{2}{c}{LoCoMo} & \multicolumn{2}{c}{LongMemEval-S} \\
\cmidrule(lr){3-4} \cmidrule(lr){5-6}
Method & Model & F1 & EM & F1 & EM \\
\midrule
\multirow{2}{*}{No Memory}
  & Qwen3-4B & 23.33 $\pm$ 0.41 & 8.30 $\pm$ 0.22  & 5.30 $\pm$ 0.19  & 0.00 $\pm$ 0.00 \\
  & Qwen3-8B & 18.48 $\pm$ 0.39 & 3.12 $\pm$ 0.18  & 3.50 $\pm$ 0.16  & 0.00 $\pm$ 0.00 \\
\midrule
\multirow{2}{*}{MemAgent/MEM1}
  & Qwen3-4B & 20.74 $\pm$ 0.36 & 11.48 $\pm$ 0.29 & 36.45 $\pm$ 0.44 & 23.20 $\pm$ 0.38 \\
  & Qwen3-8B & 24.26 $\pm$ 0.35 & 15.90 $\pm$ 0.31 & 31.66 $\pm$ 0.42 & 17.40 $\pm$ 0.35 \\
\midrule
\multirow{2}{*}{AMEM}
  & Qwen3-4B & 25.66 $\pm$ 0.34 & 11.33 $\pm$ 0.28 & 29.22 $\pm$ 0.41 & 17.76 $\pm$ 0.36 \\
  & Qwen3-8B & 20.92 $\pm$ 0.33 & 8.64 $\pm$ 0.26  & 28.99 $\pm$ 0.40 & 18.40 $\pm$ 0.34 \\
\midrule
\multirow{2}{*}{\texttt{TMEM}}
  & Qwen3-4B & 25.72 $\pm$ 0.32 & 15.40 $\pm$ 0.27 & 41.24 $\pm$ 0.45 & \textbf{25.54 $\pm$ 0.39} \\
  & Qwen3-8B & \textbf{26.75 $\pm$ 0.31} & \textbf{20.24 $\pm$ 0.29} & \textbf{41.87 $\pm$ 0.43} & 25.42 $\pm$ 0.37 \\
\bottomrule
\end{tabular}
\end{table}

\begin{table}[h]
\centering
\caption{F1 and Exact Match (EM) on multi-objective search memory benchmarks (4-objective and 8-objective). \textbf{Bold} indicates the best result per column.}
\label{tab:search-results}
\small
\begin{tabular}{llcccc}
\toprule
 & & \multicolumn{2}{c}{4-objective} & \multicolumn{2}{c}{8-objective} \\
\cmidrule(lr){3-4} \cmidrule(lr){5-6}
Method & Model & F1 & EM & F1 & EM \\
\midrule
\multirow{2}{*}{No Memory}
  & Qwen3-4B & 18.26 $\pm$ 0.37 & 9.35 $\pm$ 0.28 & 18.22 $\pm$ 0.35 & 9.28 $\pm$ 0.27 \\
  & Qwen3-8B & 19.30 $\pm$ 0.34 & 10.23 $\pm$ 0.25 & 17.89 $\pm$ 0.32 & 9.12 $\pm$ 0.24 \\
\midrule
\multirow{2}{*}{MemAgent/MEM1}
  & Qwen3-4B & 24.35 $\pm$ 0.33 & 14.72 $\pm$ 0.30 & 22.18 $\pm$ 0.31 & 13.36 $\pm$ 0.28 \\
  & Qwen3-8B & 25.23 $\pm$ 0.31 & 15.11 $\pm$ 0.29 & 22.87 $\pm$ 0.30 & 13.42 $\pm$ 0.27 \\
\midrule
\multirow{2}{*}{AMEM}
  & Qwen3-4B & 24.82 $\pm$ 0.32 & 15.13 $\pm$ 0.29 & 22.65 $\pm$ 0.30 & 13.87 $\pm$ 0.27 \\
  & Qwen3-8B & 25.36 $\pm$ 0.30 & 16.22 $\pm$ 0.28 & 23.45 $\pm$ 0.29 & 14.12 $\pm$ 0.26 \\
\midrule
\multirow{2}{*}{\texttt{TMEM}}
  & Qwen3-4B & \textbf{26.74 $\pm$ 0.28} & 16.26 $\pm$ 0.25 & 24.51 $\pm$ 0.27 & \textbf{15.03 $\pm$ 0.24} \\
  & Qwen3-8B & 26.36 $\pm$ 0.27 & \textbf{16.45 $\pm$ 0.24} & \textbf{25.12 $\pm$ 0.26} & \textbf{15.03 $\pm$ 0.23} \\
\bottomrule
\end{tabular}
\end{table}

\paragraph{Conversational and search memory results.}
Table~\ref{tab:conv-results} reports conversational F1 and EM on LoCoMo and LongMemEval-S, while Table~\ref{tab:search-results} reports search F1 and EM on the 4-objective and 8-objective splits. On conversational memory, \texttt{TMEM} obtains the strongest overall results. The LongMemEval-S gains are especially clear: over the best explicit-memory baseline, \texttt{TMEM} improves by +4.79 F1 / +2.34 EM with Qwen3-4B and +10.21 F1 / +7.02 EM with Qwen3-8B. On LoCoMo, the F1 gap for Qwen3-4B is small (25.72 vs. 25.66), but the EM gain is larger (+3.92 over the best baseline), and Qwen3-8B shows larger improvements on both F1 and EM. On search memory, \texttt{TMEM} improves over the strongest explicit-memory baseline by +1.92 F1 / +1.13 EM on the 4-objective split and +1.86 F1 / +1.16 EM on the 8-objective split for Qwen3-4B. For Qwen3-8B, the gains remain positive but smaller on the 4-objective split (+1.00 F1 / +0.23 EM), so we treat that case as directional rather than decisive.

\begin{table}[h]
\centering
\caption{Accuracy (\%) on the filtered CL-Bench evaluation set across four knowledge categories: \textbf{DK} (Domain Knowledge), \textbf{ED} (Empirical Discovery), \textbf{PT} (Procedural Task), \textbf{RS} (Rule System). \textbf{Bold} indicates the best result per column.}
\label{tab:clbench}
\small
\begin{tabular}{llccccc}
\toprule
Method & Model & DK & ED & PT & RS & Total \\
\midrule
\multirow{2}{*}{No Memory}
  & Qwen3-4B & \textbf{34.96 $\pm$ 0.72} & 10.00 $\pm$ 0.58 & 25.71 $\pm$ 0.81 & \textbf{36.36 $\pm$ 0.76} & 30.45 $\pm$ 0.63 \\
  & Qwen3-8B & \textbf{34.96 $\pm$ 0.69} & 6.67 $\pm$ 0.54  & 35.71 $\pm$ 0.79 & 34.85 $\pm$ 0.73 & 32.18 $\pm$ 0.61 \\
\midrule
\multirow{2}{*}{MemAgent}
  & Qwen3-4B & 26.02 $\pm$ 0.74 & 10.00 $\pm$ 0.56 & 30.00 $\pm$ 0.77 & 33.33 $\pm$ 0.74 & 26.99 $\pm$ 0.60 \\
  & Qwen3-8B & 33.33 $\pm$ 0.70 & 6.67 $\pm$ 0.53  & 38.57 $\pm$ 0.75 & 34.85 $\pm$ 0.72 & 32.18 $\pm$ 0.59 \\
\midrule
\multirow{2}{*}{AMEM}
  & Qwen3-4B & 28.46 $\pm$ 0.71 & 10.00 $\pm$ 0.55 & 25.71 $\pm$ 0.79 & 33.33 $\pm$ 0.73 & 26.99 $\pm$ 0.58 \\
  & Qwen3-8B & 31.71 $\pm$ 0.68 & 10.00 $\pm$ 0.56 & 30.00 $\pm$ 0.76 & 30.30 $\pm$ 0.71 & 28.72 $\pm$ 0.57 \\
\midrule
\multirow{2}{*}{\texttt{TMEM}}
  & Qwen3-4B & \textbf{34.96 $\pm$ 0.67} & \textbf{16.67 $\pm$ 0.62} & 34.29 $\pm$ 0.74 & 34.85 $\pm$ 0.70 & 32.87 $\pm$ 0.56 \\
  & Qwen3-8B & 34.15 $\pm$ 0.66 & 10.00 $\pm$ 0.55 & \textbf{42.86 $\pm$ 0.71} & 34.85 $\pm$ 0.69 & \textbf{33.91 $\pm$ 0.55} \\
\bottomrule
\end{tabular}
\end{table}

\paragraph{CL-Bench results.}
Table~\ref{tab:clbench} reports rubric accuracy on the filtered CL-Bench evaluation set. \texttt{TMEM} remains strongest overall: it achieves 32.87\% for Qwen3-4B and 33.91\% for Qwen3-8B, improving over the best explicit-memory baseline by +2.42 and +1.73 points, respectively. The category-level pattern is still mixed: \texttt{TMEM} has the largest gain on Empirical Discovery for Qwen3-4B and the best Procedural Task score for Qwen3-8B, while the no-memory baseline remains competitive on Rule System. We therefore treat these numbers as evidence on the filtered answerable set rather than as a claim about the full CL-Bench distribution. We also evaluate on the full unfiltered CL-Bench split, and detailed results are reported in Appendix Table~\ref{tab:clbench-unfiltered}.

\subsection{Evaluation after the RL}

\begin{figure}[h]
  \centering
  \begin{subfigure}[b]{0.48\linewidth}
    \includegraphics[width=\linewidth]{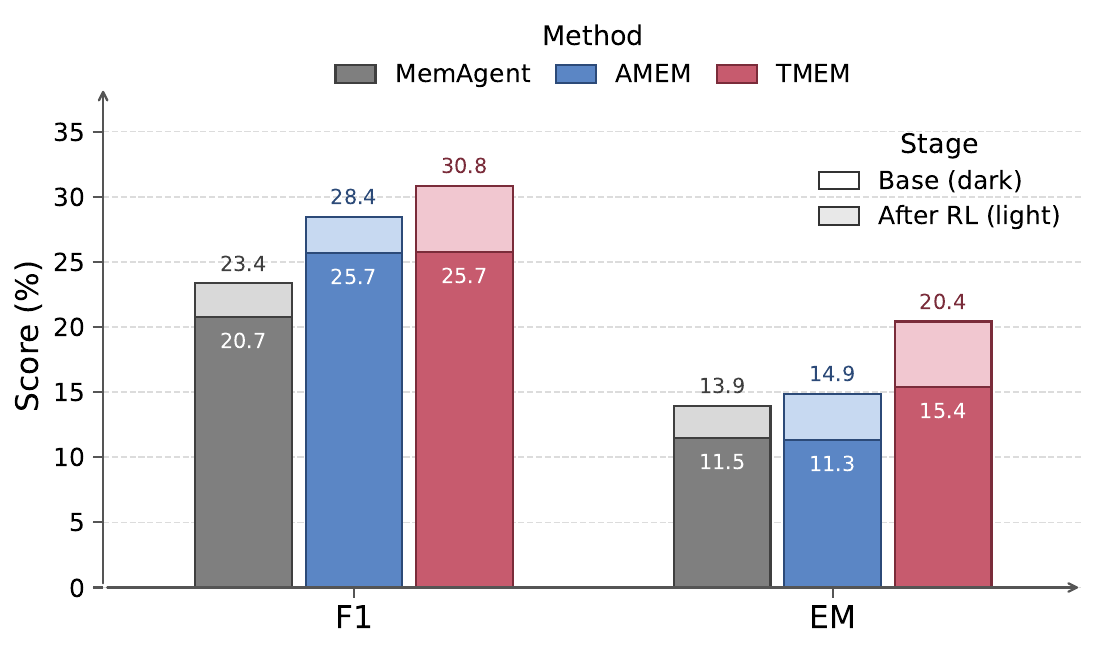}
    \caption{LoCoMo}
  \end{subfigure}
  \hfill
  \begin{subfigure}[b]{0.48\linewidth}
    \includegraphics[width=\linewidth]{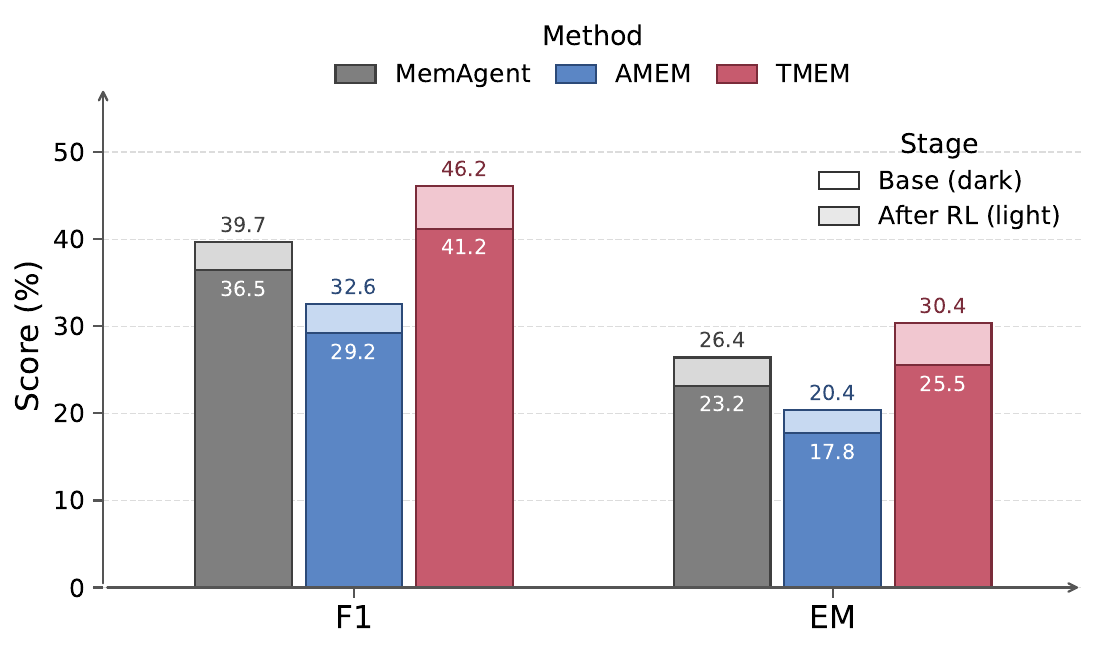}
    \caption{LongMemEval-S}
  \end{subfigure}
  \caption{F1 and Exact Match (EM) improvements after the RL phase on LoCoMo and LongMemEval-S. Stacked bars show the base score (dark) and RL gain (light) for each method.}
  \label{fig:rl-conv-search}
\end{figure}

\paragraph{Conversational RL gains.}
Figure~\ref{fig:rl-conv-search} summarizes the post-RL uplift on LoCoMo and LongMemEval-S. RL improves both F1 and EM for all three memory methods, but \texttt{TMEM} receives the largest absolute gains. On LoCoMo, \texttt{TMEM} improves by +5.12 F1 and +5.02 EM, exceeding MemAgent / MEM1 (+2.62 F1, +2.46 EM) and AMEM (+2.79 F1, +3.54 EM). On LongMemEval-S, \texttt{TMEM} again obtains the strongest gains (+4.92 F1, +4.84 EM), compared with +3.26 / +3.23 for MemAgent / MEM1 and +3.34 / +2.66 for AMEM. This pattern suggests that RL benefits are larger when the memory mechanism can adapt fast weights during the episode; all gains are computed from the corresponding three-run averaged scores.

\begin{figure}[h]
  \centering
  \begin{subfigure}[b]{0.48\linewidth}
    \includegraphics[width=\linewidth]{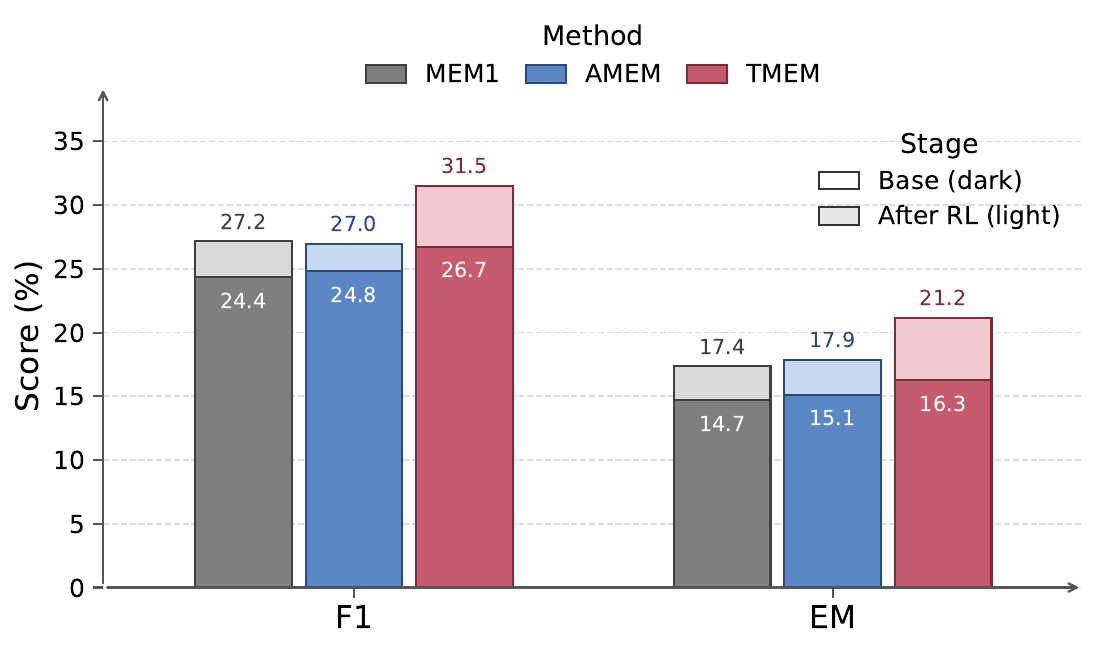}
    \caption{4-objective}
  \end{subfigure}
  \hfill
  \begin{subfigure}[b]{0.48\linewidth}
    \includegraphics[width=\linewidth]{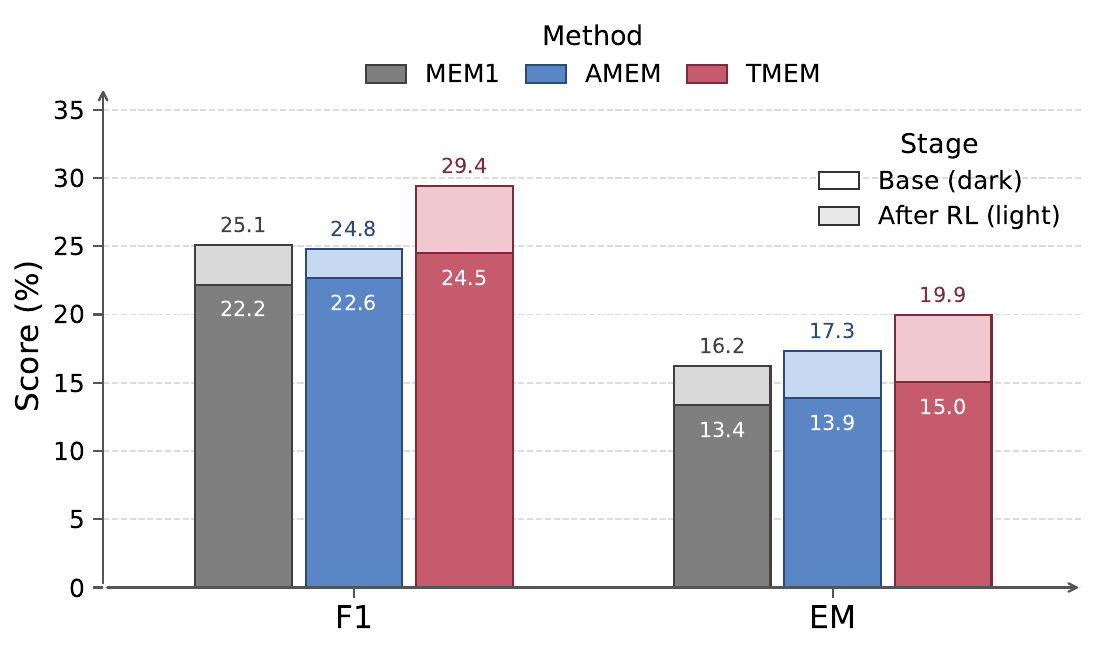}
    \caption{8-objective}
  \end{subfigure}
  \caption{F1 and Exact Match (EM) improvements after the RL phase on search-agent benchmarks. Stacked bars show the base score (dark) and RL gain (light) for each method.}
  \label{fig:rl-search}
\end{figure}

\paragraph{Search-agent RL gains.}
Figure~\ref{fig:rl-search} shows the same ordering on search-agent tasks. After RL, \texttt{TMEM} reaches 31.52 F1 / 21.17 EM on the 4-objective split and 29.43 F1 / 19.95 EM on the 8-objective split. These scores improve over its pre-RL version by +4.78 / +4.91 and +4.92 / +4.92 in F1 / EM, respectively, and exceed the strongest explicit-memory baseline after RL by +4.34 / +3.29 on the 4-objective split and +4.36 / +2.63 on the 8-objective split. The gains are consistent with the conversational results: optimizing the memory-writing behavior is helpful when the agent must preserve evidence for several independent subgoals.

\subsection{Ablation}

We ablate four design choices that are central to the fast-weight rollout design using Qwen3-4B. In each experiment, we vary one factor while keeping all others at their default values. These ablations focus on the quantitative controls for which we have complete results: trigger budget, SVD initialization, supervision form, and whether the SVD subspace is frozen.

\paragraph{Effect of the context budget $L_{\max}$.}
Figure~\ref{fig:context-budget-ablation} reports sensitivity to the context budget $L_{\max}$ on LoCoMo and LongMemEval-S. We vary $L_{\max}$ while keeping rank, learning rate, TTT epochs, and extraction policy fixed.

\begin{figure}[h]
\centering
\begin{subfigure}[t]{0.48\linewidth}
\centering
\includegraphics[width=\linewidth]{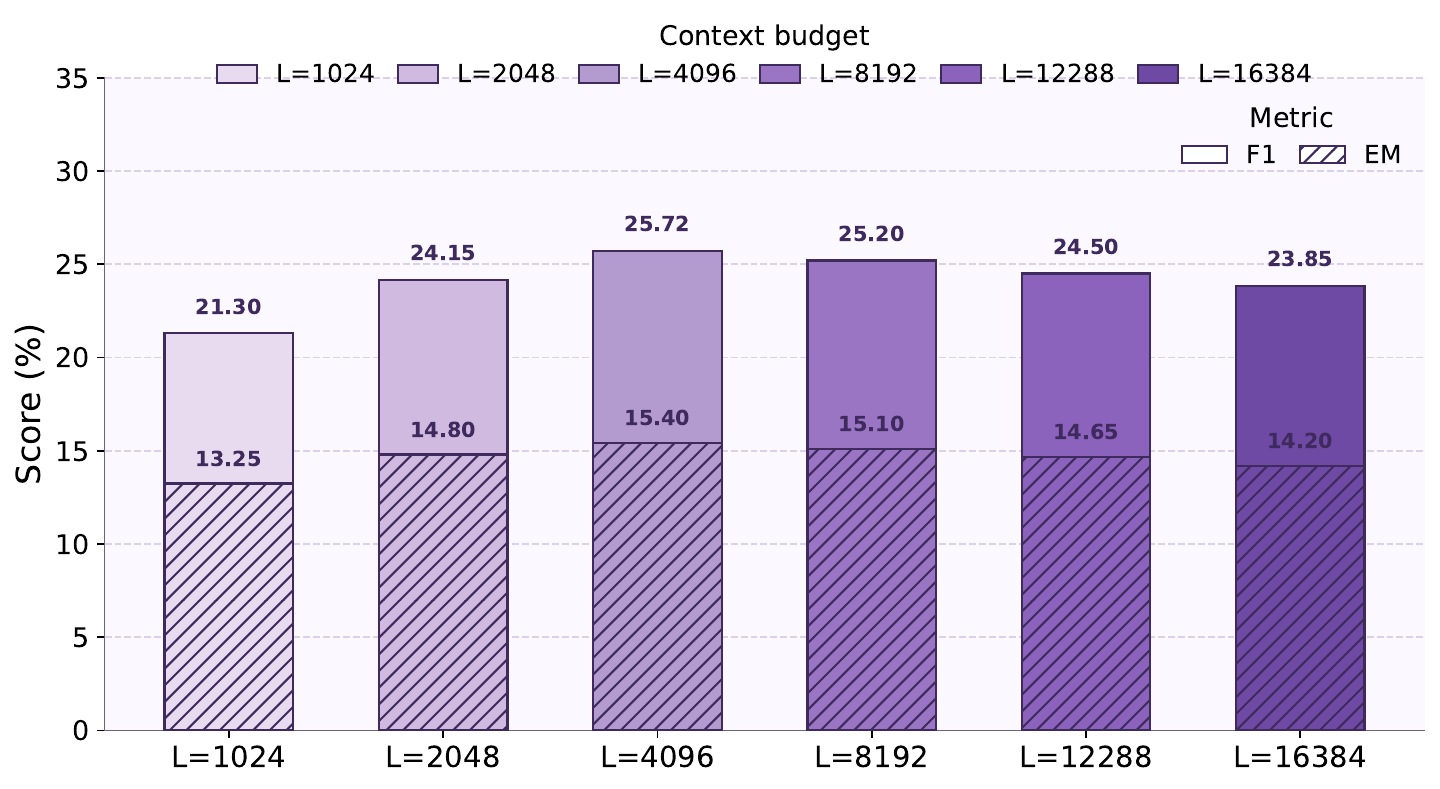}
\caption{LoCoMo}
\label{fig:context-budget-locomo}
\end{subfigure}
\hfill
\begin{subfigure}[t]{0.48\linewidth}
\centering
\includegraphics[width=\linewidth]{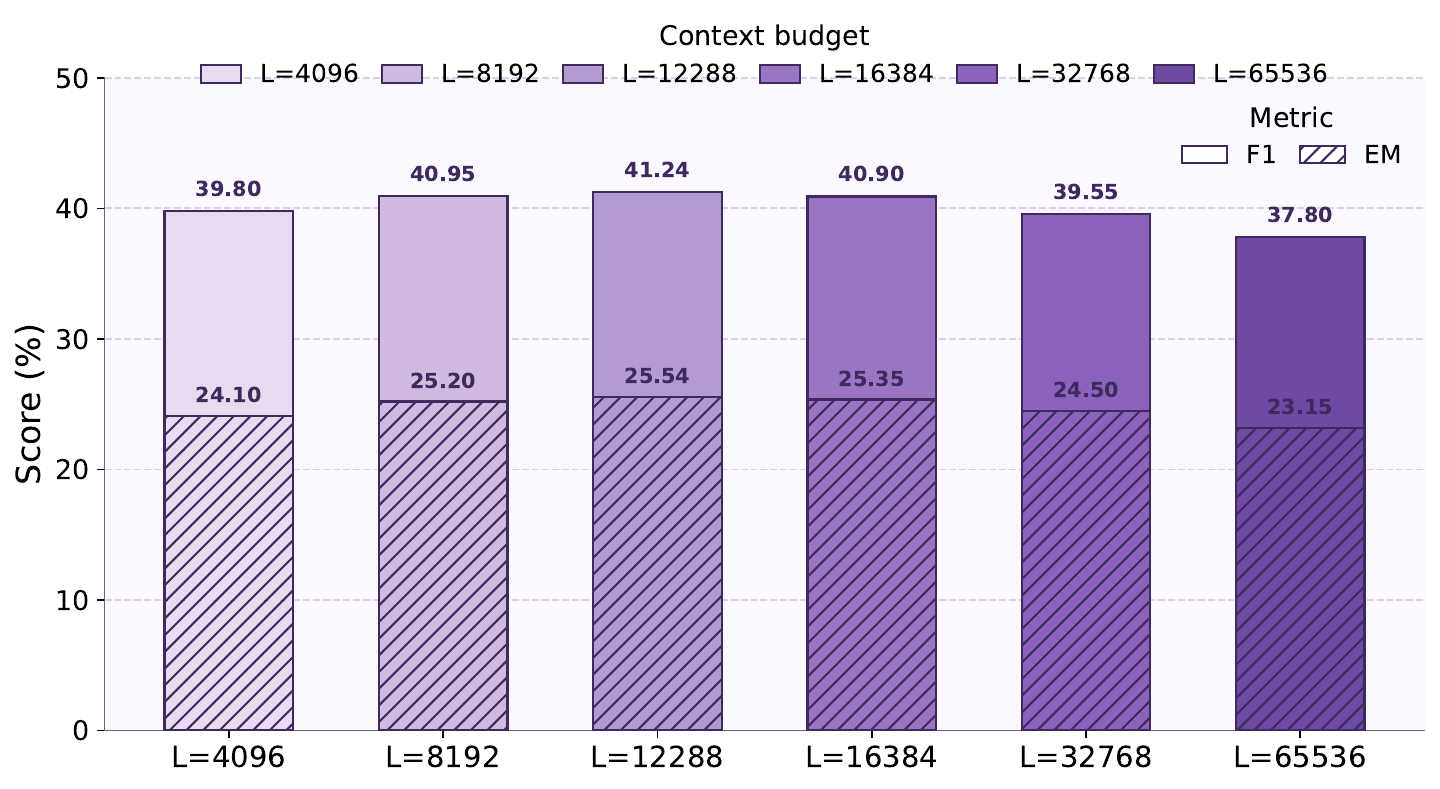}
\caption{LongMemEval-S}
\label{fig:context-budget-longmemeval}
\end{subfigure}
\caption{Effect of context budget $L_{\max}$ on \texttt{TMEM} with Qwen3-4B. Solid bars denote F1 and hatched bars denote EM.}
\label{fig:context-budget-ablation}
\end{figure}

Both datasets exhibit a consistent bell-curve pattern: performance rises as $L_{\max}$ increases toward the optimal value and falls on either side. An excessively small $L_{\max}$ triggers memory extraction too frequently, causing aggressive compression that can discard fine-grained details before they are absorbed into $\Delta_t$. An excessively large $L_{\max}$ allows the working context to grow unwieldy, making the extraction prompt harder to ground and reducing the quality of the QA pairs written into parametric memory. The optimal budgets differ across benchmarks---$L_{\max}{=}4096$ for LoCoMo and $L_{\max}{=}12288$ for LongMemEval-S---which is consistent with LongMemEval-S requiring a larger working window before each TTT update.

\paragraph{SVD initialization vs.\ random initialization.}
Table~\ref{tab:ablation-init} compares SVD-based subspace initialization against standard random initialization on both benchmarks.

\begin{table}[h]
\centering
\caption{Ablation on initialization strategy (Qwen3-4B) across LoCoMo and LongMemEval-S.}
\label{tab:ablation-init}
\small
\begin{tabular}{lcccc}
\toprule
\multirow{2}{*}{Initialization} & \multicolumn{2}{c}{LoCoMo} & \multicolumn{2}{c}{LongMemEval-S} \\
\cmidrule(lr){2-3} \cmidrule(lr){4-5}
 & F1 (\%) & EM (\%) & F1 (\%) & EM (\%) \\
\midrule
Random Gaussian & 24.26 & 12.15 & 38.54 & 22.33 \\
SVD (ours)      & 25.72 & 15.40 & 41.24 & 25.54 \\
\bottomrule
\end{tabular}
\end{table}

SVD initialization consistently outperforms random Gaussian initialization on both benchmarks. The gain is most pronounced on EM (+3.25 on LoCoMo, +3.21 on LongMemEval-S), suggesting that anchoring the LoRA subspace to the principal directions of the pretrained weights improves precise recall rather than just broader coverage. This aligns with the theoretical prediction of Theorem~\ref{thm:svd}: by pre-selecting the row space that captures the largest singular values of $W$, online TTT updates only need to learn the coefficient matrix $B$, which is useful in the few-gradient-step regime available at each trigger.

\paragraph{Supervision form for $\mathcal{T}$.}
Table~\ref{tab:ablation-supervision} studies how the supervision signal used by $\mathcal{T}$ affects downstream memory quality across both benchmarks.

\begin{table}[h]
\centering
\caption{Ablation on supervision form (Qwen3-4B) across LoCoMo and LongMemEval-S.}
\label{tab:ablation-supervision}
\small
\begin{tabular}{lcccc}
\toprule
\multirow{2}{*}{Supervision signal} & \multicolumn{2}{c}{LoCoMo} & \multicolumn{2}{c}{LongMemEval-S} \\
\cmidrule(lr){2-3} \cmidrule(lr){4-5}
 & F1 (\%) & EM (\%) & F1 (\%) & EM (\%) \\
\midrule
Raw next-token prediction & 21.19 & 10.74 & 10.37 & 7.62 \\
Free-form summary         & 24.86 & 14.28 & 35.44 & 20.18 \\
QA pairs (default)        & 25.72 & 15.40 & 41.24 & 25.54 \\
\bottomrule
\end{tabular}
\end{table}

The choice of supervision signal has a substantial effect, particularly on LongMemEval-S. Raw next-token prediction performs the worst by a large margin (10.37 F1 on LongMemEval-S), as it trains the model to reproduce the raw context verbatim rather than distill task-relevant knowledge. Free-form summary improves over this baseline by providing higher-level abstractions, but still falls short of QA pairs, which explicitly encode question--answer structure aligned with downstream memory queries. The gap between free-form summary and QA pairs is especially large on LongMemEval-S (+5.80 F1, +5.36 EM), where many questions require precise factual recall over long dialogue histories.

\paragraph{Freezing the SVD subspace.}
\begin{table}[h]
\centering
\caption{Ablation on freezing SVD subspace (Qwen3-4B) across LoCoMo and LongMemEval-S.}
\label{tab:ablation-freezeA}
\small
\begin{tabular}{lcccc}
\toprule
\multirow{2}{*}{Update mode} & \multicolumn{2}{c}{LoCoMo} & \multicolumn{2}{c}{LongMemEval-S} \\
\cmidrule(lr){2-3} \cmidrule(lr){4-5}
 & F1 (\%) & EM (\%) & F1 (\%) & EM (\%) \\
\midrule
Train $B$ only (freeze $A$) & 25.72 & 15.40 & 41.24 & 25.54 \\
Train $A$+$B$               & 25.89 & 15.58 & 41.51 & 25.66 \\
\bottomrule
\end{tabular}
\end{table}

Table~\ref{tab:ablation-freezeA} shows that allowing both $A$ and $B$ to be updated yields only marginal improvements over freezing $A$ (+0.17 F1 / +0.18 EM on LoCoMo; +0.27 F1 / +0.12 EM on LongMemEval-S). This is consistent with the SVD-initialized subspace already covering most useful few-step adaptation directions in these benchmarks. Freezing $A$ reduces the number of online-updated LoRA factors and preserves nearly the same accuracy, making it the preferred default in our experiments.

\subsection{Efficiency comparison with memory strategies}

Table~\ref{tab:efficiency-combined} compares the computational and memory overhead of \texttt{TMEM} against existing memory strategies: no memory, retrieval-based (A-MEM), and summary-based (MemAgent), across both LoCoMo and LongMemEval-S. Each row uses Qwen3-4B under the same local evaluation pipeline and batch size one. Wall-clock time is measured per episode and includes memory operations performed during the episode as well as SVD adapter initialization on the first episode; GPU memory is the peak allocated memory during evaluation.

\begin{table}[h]
\centering
\caption{Efficiency comparison across memory strategies (Qwen3-4B) on LoCoMo and LongMemEval-S. Wall-clock time and GPU memory are measured per episode under the same evaluation pipeline.}
\label{tab:efficiency-combined}
\small
\begin{tabular}{lcccc}
\toprule
\multirow{2}{*}{Method} & \multicolumn{2}{c}{LoCoMo} & \multicolumn{2}{c}{LongMemEval-S} \\
\cmidrule(lr){2-3} \cmidrule(lr){4-5}
 & Time (s/ep) & GPU Mem (GB) & Time (s/ep) & GPU Mem (GB) \\
\midrule
No memory            & 1.69 & 46.3 & 8.73 & 78.5 \\
Retrieval (A-MEM)    & 2.03 & 13.4 & 15.33 & 13.2 \\
Summary (MemAgent)   & 0.94 & 11.2 & 5.12 & 12.7 \\
\texttt{TMEM} (ours) & 1.53 & 21.3 & 6.55 & 22.9 \\
\bottomrule
\end{tabular}
\end{table}

Several observations stand out. First, the no-memory baseline incurs the highest GPU memory cost (46.3 GB on LoCoMo, 78.5 GB on LongMemEval-S) because the raw context grows throughout the episode. Second, retrieval-based A-MEM is the slowest on LongMemEval-S (15.33 s/ep), consistent with the cost of maintaining and querying an external memory store over a large history. Third, summary-based MemAgent achieves the lowest wall-clock time and GPU footprint but also lower accuracy than \texttt{TMEM} in the main tables. \texttt{TMEM} sits between retrieval and summary in both time and GPU cost: the per-trigger online LoRA update adds overhead relative to summary memory, while still remaining faster than retrieval on LongMemEval-S and substantially lighter than the raw-context no-memory baseline.

\section{Discussion}

\paragraph{Relation to prior memory agents.}
The fast-weight rollout view separates working context $h_t$, textual memory $m_t$, and fast weights $\Delta_t$. Existing context-management agents are recovered by keeping $\Delta_t\equiv0$ and using only the textual update $g(m_t,a_t)$, so the policy remains a frozen model conditioned on rewritten or retrieved prompt-space memory. \texttt{TMEM} changes the rollout by using the memory prompt $d$ to make the boundary action $a_{t_i}$ into QA-style supervision and then applying $\mathcal{T}$ to update $\Delta_{t_i+1}$; subsequent actions are therefore sampled from $\pi_{\theta_0+\Delta_{t_i+1}}$, not only from a frozen model conditioned on compressed text.

\paragraph{Practical trade-off.}
The main control knobs are the trigger budget $L_{\max}$ and the update operator $\mathcal{T}$. A small $L_{\max}$ triggers frequent extraction and online updates but risks over-compressing local evidence; a large $L_{\max}$ reduces update cost but makes the boundary extraction harder and the prompt more expensive. The role of $\mathcal{T}$ is to decide how strongly the extracted supervision is written into $\Delta_t$. Thus, \texttt{TMEM} trades modest online-training overhead for a memory channel that can influence later reasoning without repeatedly placing all past evidence back into the prompt.

\section{Conclusion}

We presented \texttt{TMEM}, a self-evolving parametric memory framework that treats test-time LoRA updates as part of the agent's rollout dynamics rather than as an external post-processing step. By jointly modeling working context $h_t$, explicit memory $m_t$, and fast weights $\Delta_t$, the framework unifies prompt-space memory methods with a parametric memory channel that can directly alter future decisions within an episode. This perspective leads to a stop-gradient policy optimization objective in which outcome rewards improve both ordinary task actions and memory-writing actions, so the base model learns to generate supervision that is useful for subsequent online adaptation. Across conversational memory, search-agent memory, and context-learning benchmarks, \texttt{TMEM} consistently outperforms summary-based and retrieval-based baselines while maintaining practical efficiency. Overall, the results support a simple claim: for long-horizon agents, memory is most effective when it can both be read from context and written into fast model parameters at test time.

\newpage

\bibliography{biblio}
\bibliographystyle{colm2024_conference}

\appendix
\newpage

\section{Approximation Advantage of the SVD Row Space}\label{app:svd}

We formalize the claim that initializing the LoRA projection matrix via the truncated SVD of the pretrained weight yields no larger an approximation error than random Gaussian initialization (strictly smaller whenever $\rho_r > r/d_{\text{in}}$), under a mild spectral alignment condition.

\paragraph{Setup.}
Let $W \in \mathbb{R}^{d_{\text{out}} \times d_{\text{in}}}$ be a pretrained weight matrix with SVD $W = U \Sigma V^{\top}$. Write $V_r = [v_1, \dots, v_r] \in \mathbb{R}^{d_{\text{in}} \times r}$ for the top-$r$ right singular vectors. We assume $r \leq \operatorname{rank}(W)$ (so that the top-$r$ singular values are strictly positive) and that the optimal full-rank weight perturbation $\Delta^{*} \in \mathbb{R}^{d_{\text{out}} \times d_{\text{in}}}$ for the downstream task satisfies $\Delta^{*} \neq 0$. Under the LoRA parameterization $\Delta = BA$ with a fixed projection $A \in \mathbb{R}^{r \times d_{\text{in}}}$ ($\operatorname{rank}(A) = r$) and a learnable coefficient matrix $B \in \mathbb{R}^{d_{\text{out}} \times r}$, the best achievable approximation error is
\[
\mathcal{E}(A) \;=\; \min_{B \in \mathbb{R}^{d_{\text{out}} \times r}} \|\Delta^{*} - BA\|_F^2.
\]
Define the \emph{spectral alignment coefficient}
\[
\rho_r(\Delta^{*}, W) \;=\; \frac{\|\Delta^{*} V_r\|_F^2}{\|\Delta^{*}\|_F^2} \;\in\; [0,\,1],
\]
which measures the fraction of energy of $\Delta^{*}$ that falls into the top-$r$ right singular subspace of $W$.

\begin{assumption}[Spectral alignment]\label{asm:align}
The optimal downstream perturbation satisfies $\rho_r(\Delta^{*}, W) \;\geq\; \dfrac{r}{d_{\textup{in}}}$.
\end{assumption}

\noindent Assumption~\ref{asm:align} requires only that $\Delta^{*}$ concentrates at least as much energy on the top-$r$ subspace of $W$ as would a uniformly random $r$-dimensional subspace in expectation. Since pretrained weight spectra decay rapidly (often as a power law $\sigma_j \propto j^{-\alpha}$) and fine-tuning updates empirically concentrate on the leading singular directions \citep{aghajanyan2021intrinsic}, the condition $\rho_r \gg r/d_{\text{in}}$ holds comfortably in practice.

\paragraph{Formal statement of Theorem~\ref{thm:svd}.}
Let $A_{\textup{SVD}} = \Sigma_r\,V_r^{\top}$ be the SVD initialization derived from $W$, and let $A_{\textup{rand}} \in \mathbb{R}^{r \times d_{\textup{in}}}$ be a random initialization whose rows are drawn i.i.d.\ from $\mathcal{N}(0, \sigma^2 I_{d_{\textup{in}}})$. Under Assumption~\ref{asm:align},
\[
\mathcal{E}(A_{\textup{SVD}}) \;\leq\; \E\bigl[\mathcal{E}(A_{\textup{rand}})\bigr],
\]
with equality if and only if $\rho_r(\Delta^{*}, W) = r / d_{\textup{in}}$.

\begin{proof}
\textbf{Reduction to row-space projection.}
For fixed full-rank $A$, the least-squares optimum is $B^{*} = \Delta^{*} A^{\top}(AA^{\top})^{-1}$, giving residual
\[
\Delta^{*} - B^{*}A = \Delta^{*}\bigl(I - A^{\top}(AA^{\top})^{-1}A\bigr) = \Delta^{*}\,P_{\operatorname{row}(A)}^{\perp},
\]
where $\operatorname{row}(A) \subseteq \mathbb{R}^{d_{\text{in}}}$ denotes the row space of $A$ (the span of its $r$ row vectors),
\[
P_{\operatorname{row}(A)} \;=\; A^{\top}(AA^{\top})^{-1}A
\]
is the orthogonal projection onto $\operatorname{row}(A)$, and $P_{\operatorname{row}(A)}^{\perp} = I - P_{\operatorname{row}(A)}$ projects onto its orthogonal complement. Since $P_{\operatorname{row}(A)}$ and $P_{\operatorname{row}(A)}^{\perp}$ are complementary orthogonal projections, the Pythagorean theorem for the Frobenius norm gives
\begin{equation}\label{eq:approx-proj}
\mathcal{E}(A) = \|\Delta^{*}\,P_{\operatorname{row}(A)}^{\perp}\|_F^2 = \|\Delta^{*}\|_F^2 - \|\Delta^{*}\,P_{\operatorname{row}(A)}\|_F^2.
\end{equation}
Note that $\mathcal{E}(A)$ depends on $A$ only through its row space.

\medskip\noindent\textbf{Approximation error under SVD initialization.}
Substituting $A_{\text{SVD}} = \Sigma_r\,V_r^{\top}$ into the projection formula, and using $V_r^{\top}V_r = I_r$ (orthonormality of the right singular vectors):
\[
P_{\operatorname{row}(A_{\text{SVD}})}
= V_r\,\Sigma_r\,\bigl(\Sigma_r\,V_r^{\top}\,V_r\,\Sigma_r\bigr)^{-1}\,\Sigma_r\,V_r^{\top}
= V_r\,\Sigma_r\,\Sigma_r^{-2}\,\Sigma_r\,V_r^{\top}
= V_r V_r^{\top}.
\]
Substituting into~\eqref{eq:approx-proj}:
\[
\|\Delta^{*} V_r V_r^{\top}\|_F^2 = \operatorname{tr}\!\bigl(V_r^{\top} \Delta^{*\top}\!\Delta^{*} V_r\bigr) = \sum_{j=1}^{r} \|\Delta^{*} v_j\|^2 = \rho_r\,\|\Delta^{*}\|_F^2.
\]
Hence
\begin{equation}\label{eq:error-svd}
\mathcal{E}(A_{\text{SVD}}) = (1 - \rho_r)\|\Delta^{*}\|_F^2.
\end{equation}

\medskip\noindent\textbf{Approximation error under Gaussian random initialization.}
Substituting $A_{\text{rand}}$ into the projection formula gives
\[
P_{\operatorname{row}(A_{\text{rand}})} = A_{\text{rand}}^{\top}(A_{\text{rand}}\,A_{\text{rand}}^{\top})^{-1}A_{\text{rand}}.
\]
To simplify, let $A_{\text{rand}}^{\top} = QR$ be the thin QR decomposition, where $Q \in \mathbb{R}^{d_{\text{in}} \times r}$ has orthonormal columns and $R \in \mathbb{R}^{r \times r}$ is invertible. Then $A_{\text{rand}} = R^{\top}Q^{\top}$, so $A_{\text{rand}}\,A_{\text{rand}}^{\top} = R^{\top}R$ and
\[
P_{\operatorname{row}(A_{\text{rand}})} = QR\,(R^{\top}R)^{-1}\,R^{\top}Q^{\top} = Q\,Q^{\top}.
\]
When the rows of $A_{\text{rand}}$ are i.i.d.\ $\mathcal{N}(0, \sigma^2 I)$, the column span of $Q$ (i.e.\ the row space of $A_{\text{rand}}$) is distributed according to the Haar measure on the Grassmannian $\operatorname{Gr}(r, d_{\text{in}})$, by rotational invariance of the Gaussian distribution. By the symmetry of the Haar measure,
\[
\E[QQ^{\top}] = \frac{r}{d_{\text{in}}}\,I_{d_{\text{in}}}.
\]
Taking the expectation of the projected energy in~\eqref{eq:approx-proj}:
\[
\E\bigl[\|\Delta^{*} QQ^{\top}\|_F^2\bigr]
= \E\bigl[\operatorname{tr}\!\bigl(QQ^{\top} \Delta^{*\top}\!\Delta^{*}\, QQ^{\top}\bigr)\bigr]
= \E\bigl[\operatorname{tr}\!\bigl(\Delta^{*\top}\!\Delta^{*}\, QQ^{\top}\bigr)\bigr]
= \operatorname{tr}\!\bigl(\Delta^{*\top}\!\Delta^{*}\cdot \E[QQ^{\top}]\bigr)
= \frac{r}{d_{\text{in}}}\,\|\Delta^{*}\|_F^2,
\]
where the second equality uses the idempotency $(QQ^{\top})^2 = QQ^{\top}$ and the cyclic property of trace, and the third exchanges $\operatorname{tr}$ and $\E$ by linearity.
Hence
\begin{equation}\label{eq:error-rand}
\E[\mathcal{E}(A_{\text{rand}})] = \bigl(1 - r/d_{\text{in}}\bigr)\|\Delta^{*}\|_F^2.
\end{equation}

\medskip\noindent\textbf{Comparing the two initializations.}
Subtracting~\eqref{eq:error-svd} from~\eqref{eq:error-rand}:
\[
\E[\mathcal{E}(A_{\text{rand}})] - \mathcal{E}(A_{\text{SVD}}) = \Bigl(\rho_r - \frac{r}{d_{\text{in}}}\Bigr)\|\Delta^{*}\|_F^2 \;\geq\; 0,
\]
where the inequality follows from Assumption~\ref{asm:align}. Equality holds if and only if $\rho_r = r/d_{\text{in}}$.
\end{proof}

\begin{remark}[Implicit preconditioning effect]\label{rmk:precond}
Beyond the approximation gap, SVD initialization also improves the optimization dynamics of learning $B$. Each gradient step produces the effective weight update $\Delta_{k+1} - \Delta_k = -\eta\,G_k\,A^{\top}\!A$, where $G_k = \nabla_{\Delta}\ell_k$. For SVD initialization, $A^{\top}\!A = V_r \Sigma_r^2 V_r^{\top}$, so the effective step size along the $j$-th singular direction is $\eta\sigma_j^2(W)$: more important directions receive proportionally larger updates, acting as an adaptive preconditioner. Random initialization instead yields a rank-$r$ matrix $A^{\top}\!A$ whose nonzero eigenvalues concentrate near $\sigma^2 r$, so within the random row space all directions are weighted equally---there is no preferential alignment with the pretrained spectrum. In the TTT regime where only a few gradient steps are taken per trigger, the directional bias of SVD initialization substantially accelerates convergence.
\end{remark}

\newpage

\section{Derivation of Stop-Gradient Policy Optimization}\label{app:pg}

Let $\tau$ denote a complete fast-weight rollout. The stochastic components of the rollout are actions $a_t$, environment observations, and the randomness of the online update operator $\mathcal{T}$. Let $\{t_i\}_{i=1}^{N}$ be the memory-trigger indices, with $t_0=0$ and $t_{N+1}=T$. Define
\[
\bar c_i =
\begin{cases}
(q,h_{t_i},m_{t_i},d), & 1\leq i\leq N,\\
(q,h_T,m_T), & i=N+1.
\end{cases}
\]
Under the stop-gradient convention, the distribution of $\Delta_t$ produced by $\mathcal{T}$ is treated as fixed when differentiating with respect to $\theta_0$. The trajectory likelihood can therefore be written, up to terms independent of $\theta_0$, as
\[
p(\tau\mid\theta_0) \propto
\prod_{i=1}^{N+1}
\left[
\prod_{t=t_{i-1}+1}^{t_i-1}
\pi_{\theta_0+\operatorname{sg}(\Delta_t)}(a_t\mid q,h_t,m_t)
\cdot
\pi_{\theta_0+\operatorname{sg}(\Delta_{t_i})}(a_{t_i}\mid \bar c_i)
\right]
\]
where $d$ is the extraction prompt used to elicit QA-pair supervision. This is the fast-weight analogue of a segment-wise rollout decomposition: each segment contains ordinary rollout actions followed by a boundary action, which is a memory-writing action for $i\leq N$ and the final task response for $i=N+1$.

For the outcome-reward objective
\[
J(\theta_0)=\mathbb{E}_{\tau\sim p_{\theta_0,\mathcal{T},\mathbb{P}_{env}}}\left[R(\tau)\right],
\]
the log-derivative trick gives
\begin{align}
\nabla_{\theta_0}J(\theta_0)
&=\mathbb{E}_{\tau}\left[R(\tau)\nabla_{\theta_0}\log p(\tau\mid\theta_0)\right] \\
&\approx
\mathbb{E}_{\tau}\left[
R(\tau)
\left(
\sum_{i=1}^{N+1}
\left[
\sum_{t=t_{i-1}+1}^{t_i-1}
\nabla_{\theta_0}\log\pi_{\theta_0+\operatorname{sg}(\Delta_t)}(a_t\mid q,h_t,m_t)
+
\nabla_{\theta_0}\log\pi_{\theta_0+\operatorname{sg}(\Delta_{t_i})}(a_{t_i}\mid \bar c_i)
\right]
\right)
\right].
\end{align}
The approximation consists exactly of stopping gradients through $\mathcal{T}$ and through the fast-weight values it produces. Thus, gradients are assigned segment by segment to ordinary task behavior, to the QA-pair extraction actions that provide online LoRA supervision, and to the terminal response, while avoiding back-propagation through the online optimization itself.

\section{Unfiltered CL-Bench Accuracy}\label{app:clbench-unfiltered}

For completeness, we report rubric accuracy on the unfiltered CL-Bench split. Following the main text presentation, this appendix table reports only accuracy rates.

\begin{table}[h]
\centering
\caption{Accuracy (\%) on the unfiltered CL-Bench split across \textbf{DK}, \textbf{ED}, \textbf{PT}, and \textbf{RS}. \textbf{Bold} indicates the best result per column.}
\label{tab:clbench-unfiltered}
\small
\begin{tabular}{llccccc}
\toprule
Method & Model & DK & ED & PT & RS & Total \\
\midrule
\multirow{2}{*}{No Memory}
  & Qwen3-4B & \textbf{6.49} & 1.51 & 3.82 & \textbf{4.24} & 4.63 \\
  & Qwen3-8B & \textbf{6.49} & 1.01 & 5.31 & 4.06 & 4.90 \\
\midrule
\multirow{2}{*}{MemAgent}
  & Qwen3-4B & 4.83 & 1.51 & 4.46 & 3.89 & 4.11 \\
  & Qwen3-8B & 6.18 & 1.01 & 5.73 & 4.06 & 4.90 \\
\midrule
\multirow{2}{*}{AMEM}
  & Qwen3-4B & 5.28 & 1.51 & 3.82 & 3.89 & 4.11 \\
  & Qwen3-8B & 5.88 & 1.51 & 4.46 & 3.53 & 4.37 \\
\midrule
\multirow{2}{*}{\texttt{TMEM}}
  & Qwen3-4B & \textbf{6.49} & \textbf{2.51} & 5.10 & 4.06 & 5.00 \\
  & Qwen3-8B & 6.33 & 1.51 & \textbf{6.37} & 4.06 & \textbf{5.16} \\
\bottomrule
\end{tabular}
\end{table}

The unfiltered split sizes are DK=663, ED=199, PT=471, RS=566, and Total=1899.


\newpage
\section{prompts use in the task}

\begin{figure}[h]
\centering
\begin{promptbox}[System prompt for LoCoMo and LongMemEval]{purple_dist}
\small
\begin{Verbatim}[fontsize=\small]
You are given one problem to solve, previous extracted QA pairs and one conversation session. 
Your task is to create high-quality supervised fine-tuning (SFT) QA pairs 
grounded ONLY in this session.

Question:
<question> {prompt} </question>

Previous extracted QA pairs:
<qa_history> {qa_history} </qa_history>

Session:
<session> {chunk} </session>
\end{Verbatim}
\end{promptbox}
\caption{System prompt template used in LoCoMo and LongMemEval for extracting grounded SFT QA pairs from the current session.}
\label{fig:system-prompt-locomo-longmemeval}
\end{figure}

\begin{figure}[h]
\centering
\begin{promptbox}[System prompt for search-task iterative reasoning]{purple_dist}
\small
\begin{Verbatim}[fontsize=\small]
You will answer some complex questions through iterative reasoning,
memory updates, and web searches.

At each step, you can see the question, previous interaction history in <history> 
... <history>, search query in <search> ... </search>, and the returned information in
<information> ... </information> (except the first step where you will be 
given only the question).

Then choose one of the following actions:
   - If any question remains unanswered, issue a single query for one question inside 
   <search> ... </search>. The query should consist of keywords or a short phrase. 
   Only search one question at a time.
   - If all questions are answered, provide the final answers, separated by semicolons, 
   within <answer> answer1; answer2; ... </answer>. The answers must be concise, 
   contain only essential words, and avoid any explanations.

Your output should be either <search> ... </search> or <answer> ... </answer>.

<question> {question} </question>

<history> {history} </history>

<search> {search} </search>

<information> {information} </information>
\end{Verbatim}
\end{promptbox}
\caption{System prompt template used in the search task for iterative reasoning, grounded memory updates, and step-by-step web search.}
\label{fig:system-prompt-search}
\end{figure}

\newpage

\section{Case Study}

\subsection{LoCoMo}

\begin{figure}[h]
\centering
\begin{promptbox}[LoCoMo case example]{purple_dist}
\small
\begin{Verbatim}[fontsize=\small]
Question: What did the charity race raise awareness for?
Model answer: Mental health
Ground truth: mental health
\end{Verbatim}

\begin{Verbatim}[fontsize=\scriptsize]
[
  {
    "instruction": "What is Caroline's current plan to help with mental health?",
    "output": "She is looking into counseling and mental health jobs"
  },
  {
    "instruction": "Why does Melanie run?",
    "output": "To de-stress and clear her mind"
  },
  {
    "instruction": "What activity does Melanie do to improve her mental health?",
    "output": "Running"
  },
  {
    "instruction": "What charity event did Melanie participate in last Saturday?",
    "output": "A charity race for mental health"
  },
  {
    "instruction": "What did Melanie realize about self-care after the charity race?",
    "output": "That self-care is really important and helps her better look after her family"
  },
  {
    "instruction": "What activities does Melanie carve out for herself each day 
    to refresh and stay present?",
    "output": "Running, reading, or playing her violin"
  },
  {
    "instruction": "What career areas is Caroline interested in exploring?",
    "output": "Counseling or working in mental health"
  }
]
\end{Verbatim}  
\end{promptbox}
\caption{A LoCoMo case with retrieved memory QA pairs and the final question. The model answer matches the ground truth.}
\label{fig:locomo-case-mental-health}
\end{figure}

\begin{figure}[h]
\centering
\begin{promptbox}[LoCoMo case example (adoption planning)]{purple_dist}
\small
\begin{Verbatim}[fontsize=\small]
Question: What are Caroline's plans for the summer?
Model answer: researching adoption with an agency that supports LGBTQ+ families
Ground truth: researching adoption agencies
\end{Verbatim}

\begin{Verbatim}[fontsize=\scriptsize]
[ 
  {
    "instruction": "When did Caroline pass the adoption agency interviews?",
    "output": "Last Friday"
  },
  {
    "instruction": "What does Caroline hope to achieve through adoption?",
    "output": "To build a safe and loving home for kids who haven't had a roof over their heads"
  },
  {
    "instruction": "What did Caroline do this week?",
    "output": "Caroline applied to adoption agencies."
  },
  {
    "instruction": "What event did Caroline attend last Friday?",
    "output": "A council meeting for adoption"
  },
  {
    "instruction": "What is Caroline's goal in the adoption process?",
    "output": "To give kids who need it a loving home"
  },
  {
    "instruction": "Why did Caroline choose the adoption agency she is researching?",
    "output": "Because it helps LGBTQ+ folks with adoption, and its inclusivity and support spoke to her"
  },
  {
    "instruction": "What does Caroline feel about the challenges of being a single parent in the adoption process?",
    "output": "She feels it will be tough, but she is up for the challenge"
  }
]
\end{Verbatim}
\end{promptbox}
\caption{A LoCoMo case focused on adoption planning. The prediction is semantically aligned with the ground truth.}
\label{fig:locomo-case-adoption}
\end{figure}

\begin{figure}[h]
\centering
\begin{promptbox}[LoCoMo case example (family camping preference)]{purple_dist}
\small

\begin{Verbatim}[fontsize=\small]
Question: Would Melanie be more interested in going to a national park or a theme park?
Model answer: national park
Ground truth: National park; she likes the outdoors
\end{Verbatim}

\begin{Verbatim}[fontsize=\scriptsize]
[ 
  {
    "instruction": "What did Melanie say she loves most about camping with her family?",
    "output": "It's a chance to be present and together. We bond over stories, campfires and nature."
  },
  {
    "instruction": "What did Melanie say about the importance of family after the accident?",
    "output": "Family's super important to me. They mean the world to me and I'm so thankful to have them."
  },
  {
    "instruction": "What is Melanie's family's summer tradition?",
    "output": "A family camping trip"
  },
  {
    "instruction": "What was Melanie's best camping memory?",
    "output": "Seeing the Perseid meteor shower last year"
  },
  {
    "instruction": "What did Melanie do with her family last week?",
    "output": "She took her family camping in the mountains."
  },
  {
    "instruction": "What activities did Melanie and her family do during their camping trip?",
    "output": "They explored nature, roasted marshmallows around the campfire, and went on a hike."
  }
]
\end{Verbatim}
\end{promptbox}
\caption{A LoCoMo case about family-outdoor preference. The model answer matches the labeled preference.}
\label{fig:locomo-case-camping}
\end{figure}

\clearpage

\subsection{LongMemEval}

\begin{figure}[h]
\centering
\begin{promptbox}[LongMemEval case QA pairs as JSON (part 1)]{purple_dist}
\scriptsize

\begin{Verbatim}[fontsize=\small]
Question: What is the total amount of money I earned from selling my products at the markets?,
Model answer: $225 + $150 + $120 = $495,
Ground truth: $495,
\end{Verbatim}

\begin{Verbatim}[fontsize=\scriptsize]
[
    {
      "instruction": "How much money did I earn from selling my homemade jam at the Homemade
      and Handmade Market on May 29th?",
      "output": "$225"
    },
    {
      "instruction": "What date did I sell 15 jars of my homemade jam at the Homemade and Handmade Market?",
      "output": "May 29th"
    },
    {
      "instruction": "What is the name of the gift basket I decided to use for my products?",
      "output": "Harvest Delight"
    },
    {
      "instruction": "What is a description I used for my 'Harvest Delight' gift basket on my website or social media?",
      "output": "Indulge in the flavors of the harvest season with our 'Harvest Delight' gift basket, 
      filled with artisanal goodies like our signature homemade jam, warm spice blends, and crunchy nuts."
    },
    {
      "instruction": "How much money did I earn from selling 20 potted herb plants at the Summer Solstice Market?",
      "output": "$150"
    },
    {
      "instruction": "What was the price per potted herb plant sold at the Summer Solstice Market?",
      "output": "$7.5"
    },
    {
      "instruction": "What is the name of the market where I sold 20 potted herb plants?",
      "output": "Summer Solstice Market"
    },
    {
      "instruction": "What are some popular herbal tea flavors mentioned in the session?",
      "output": "Peppermint, Chamomile, Hibiscus, Lemon balm, Rosehip, Ginger, Turmeric"
    },
    {
      "instruction": "What are some market trends in the herbal tea industry according to the session?",
      "output": "Wellness and self-care, functional ingredients like adaptogens and probiotics, 
      fruit and floral infusions, caffeine-free options, sustainable and eco-friendly packaging"
    },
    {
      "instruction": "What are some market trends in the spice blend industry according to the session?",
      "output": "Global inspiration, wellness-focused blends, vegan and vegetarian options, artisanal 
      and small-batch blends, DIY and customization"
    },
    {
      "instruction": "What is a recommended approach for highlighting health benefits in marketing for herbal teas?",
      "output": "Focus on functional benefits, avoid medical claims, use credible sources, 
      be transparent about ingredients and sourcing, and emphasize holistic benefits"
    },
    {
      "instruction": "What are some potential rewards in the proposed loyalty program?",
      "output": "Discounts on future purchases, free shipping, free products, early access to new products, 
      special promotions, invitations to events"
    },
    {
      "instruction": "What is the point system for the loyalty program as described in the session?",
      "output": "1 point for every dollar spent, with bonus points of 10 points for every $50 spent"
    },
    {
      "instruction": "What are some ways to promote the loyalty program according to the session?",
      "output": "Email marketing, social media, in-store promotions, a dedicated webpage, early sign-up offers, 
      paid ads, checkout signage, and staff training"
    },
    {
      "instruction": "How much money did I earn from selling 12 bunches of fresh organic herbs at the farmers' 
      market on May 15th?",
      "output": "$120"
    }
]
\end{Verbatim}
\end{promptbox}
\caption{LongMemEval market-earnings case in JSON form (part 1), containing QA pairs 1-15.}
\label{fig:longmemeval-case-market-earnings-part1}
\end{figure}

\begin{figure}[h]
\centering
\begin{promptbox}[LongMemEval case QA pairs as JSON (part 2)]{purple_dist}
\scriptsize
\begin{Verbatim}[fontsize=\scriptsize]
[
  {
    "instruction": "What are some popular herb-based products I could sell at the Harvest Festival Market 
    according to the session?",
    "output": "Herbal teas, herb-infused oils, herb salts, herbal jellies and jams, herbal potpourri, 
    skincare products, spice blends, vinegars, syrups, and gift baskets"
  },
  {
    "instruction": "What are some tips for drying fresh herbs for use in herbal teas and infused oils?",
    "output": "Air drying, using a dehydrator, oven drying around 150F for 1-2 hours, 
    or using a desiccant in an airtight container"
  },
  {
    "instruction": "What are some pricing strategies for herbal teas and infused oils mentioned in the session?",
    "output": "Competitor research, cost-based pricing, value-based pricing, tiered pricing, 
    bundle deals, and seasonal pricing"
  },
  {
    "instruction": "What is a recommended price range for herbal teas and infused oils according to the session?",
    "output": "Herbal teas: $5-15 per ounce; infused oils: $10-30 per ounce"
  },
  {
    "instruction": "What are some herbs suitable for use in herbal skincare products?",
    "output": "Calendula, chamomile, and lavender"
  }
]
\end{Verbatim}
\end{promptbox}
\caption{LongMemEval market-earnings case in JSON form (part 2), containing QA pairs 16-20.}
\label{fig:longmemeval-case-market-earnings-part2}
\end{figure}

\begin{figure}[h]
\centering
\begin{promptbox}[LongMemEval long-dialogue handling experience (4-item JSON)]{purple_dist}
\scriptsize
\begin{Verbatim}[fontsize=\scriptsize]
[
  {
    "instruction": "When answering a long-memory question about total earnings, what should the model do first?",
    "output": "Identify all separate revenue events related to the question before doing any calculation."
  },
  {
    "instruction": "How should the model answer when multiple QA pairs contain separate amounts 
    from different markets?",
    "output": "It should extract each relevant amount, verify that each amount refers to a distinct sale, 
    then sum them."
  },
  {
    "instruction": "Which QA pairs are necessary to answer the market earnings question in this case?",
    "output": "The QA pairs showing $225 from homemade jam, $150 from potted herb plants, 
    and $120 from fresh organic herbs."
  },
  {
    "instruction": "What should the model avoid when using long conversation QA pairs for a numerical answer?",
    "output": "It should avoid using unrelated product, marketing, pricing, or loyalty-program details unless 
    they contribute directly to the requested total."
  }
]
\end{Verbatim}
\end{promptbox}
\caption{Long-dialogue handling experience extracted from this LongMemEval case, shown separately as a 4-item JSON array.}
\label{fig:longmemeval-case-long-dialogue-experience}
\end{figure}

\clearpage

\subsection{Multi-Objective Search}

\begin{figure}[h]
\centering
\begin{promptbox}[Multi-Objective Search QA pairs as JSON (part 1)]{purple_dist}
\scriptsize

\begin{Verbatim}[fontsize=\small]
Question: What color clothing do people of the Netherlands wear during Oranjegekte or to 
celebrate the national holiday Koningsdag?; What was the name of the 1996 loose adaptation of 
William Shakespeare's \"Romeo & Juliet\" written by James Gunn?; What American professional 
Hawaiian surfer born 18 October 1992 won the Rip Curl Pro Portugal?; What nationality were 
social anthropologists Alfred Gell and Edmund Leach
Model answer: orange for the Netherlands; Tromeo and Juliet; John John Florence; British
Ground truth: "orange; Tromeo and Juliet; John John Florence; British"
\end{Verbatim}

\begin{Verbatim}[fontsize=\scriptsize]
[
  {
    "instruction": "What color clothing do people of the Netherlands wear during Oranjegekte or 
    to celebrate Koningsdag?",
    "output": "orange"
  },
  {
    "instruction": "What was the name of the 1996 loose adaptation of William Shakespeare's \"Romeo & Juliet\" 
    written by James Gunn?",
    "output": "Tromeo and Juliet"
  },
  {
    "instruction": "What American professional Hawaiian surfer born 18 October 1992 won the Rip Curl Pro Portugal?",
    "output": "John John Florence"
  },
  {
    "instruction": "What nationality were social anthropologists Alfred Gell and Edmund Leach?",
    "output": "British"
  },
  {
    "instruction": "What is Oranjegekte?",
    "output": "Oranjegekte is a Netherlands phenomenon associated with major sporting events and Koningsdag, 
    often involving orange clothing and decorations."
  },
  {
    "instruction": "Why is orange clothing associated with Koningsdag?",
    "output": "Orange is associated with the Dutch royal family, the House of Orange-Nassau."
  },
  {
    "instruction": "What evidence identifies Tromeo and Juliet as the correct answer?",
    "output": "The evidence says Tromeo and Juliet is a 1996 loose adaptation of Romeo & Juliet and that 
    James Gunn co-wrote the screenplay."
  },
  {
    "instruction": "If a query about a 1996 Romeo and Juliet adaptation returns both Romeo + Juliet and Tromeo and 
    Juliet, how should the model choose?",
    "output": "Prefer the result that matches all constraints, especially James Gunn and loose adaptation; 
    that points to Tromeo and Juliet."
  },
  {
    "instruction": "What search query is effective for identifying the James Gunn Romeo and Juliet adaptation?",
    "output": "1996 loose adaptation Romeo Juliet James Gunn"
  },
  {
    "instruction": "What evidence supports John John Florence as the surfer answer?",
    "output": "The retrieved text says John John Florence was born October 18, 1992, is a Hawaiian professional surfer, 
    and won the Meo Rip Curl Pro in Portugal."
  },
  {
    "instruction": "What evidence supports Alfred Gell and Edmund Leach being British?",
    "output": "The retrieved documents describe Alfred Gell as a British social anthropologist and Edmund Leach as a 
    British social anthropologist."
  },
]
\end{Verbatim}
\end{promptbox}
\caption{The first half of the search-task QA history in original JSON form.}
\label{fig:search-case-json-part1}
\end{figure}

\begin{figure}[t]
\centering
\begin{promptbox}[Multi-Objective Search QA pairs as JSON (part 2)]{purple_dist}
\scriptsize
\begin{Verbatim}[fontsize=\scriptsize]
[
  {
    "instruction": "In a packed multi-question search task, should the model search all remaining questions at once?",
    "output": "No. Search one question at a time so each retrieved passage can be tied to a specific answer."
  },
  {
    "instruction": "What should the model store after each useful search result?",
    "output": "Store a concise QA pair that captures the question, the answer, and only directly supported information."
  },
  {
    "instruction": "How should the model handle near-duplicate retrieved documents?",
    "output": "Use duplicates as confirmation, but avoid adding duplicate QA pairs unless the new document adds a 
    useful constraint."
  },
  {
    "instruction": "How should the final answer be formatted for a packed question with four subquestions?",
    "output": "Return four concise answers in the original question order, separated by semicolons."
  }
]
\end{Verbatim}
\end{promptbox}
\caption{The second half of the search-task QA history in original JSON form.}
\label{fig:search-case-json-part2}
\end{figure}

\clearpage


\end{document}